\newtheorem{theorem}{Theorem}[section]
\newtheorem{definition}{Definition}[section]
\newtheorem{lemma}{Lemma}[section]
\newtheorem{remark}{Remark}[section]
\def\reals{\mathbb{R}}
\newcommand{\real}{\mathbb{R}}
\newcommand{\E}{\mathbb{E}}
\newcommand{\tsigma}{\tilde{\sigma}}
\newcommand{\sigmal}{\sigma_L}
\newcommand{\A}{\mathcal{A}}
\newcommand{\B}{\mathcal{B}}
\newcommand{\D}{\mathcal{D}}
\newcommand{\G}{\mathcal{G}}
\newcommand{\K}{\mathcal{K}}
\newcommand{\F}{\mathcal{F}}
\newcommand{\MMM}{\mathcal{M}}
\newcommand{\GGG}{\mathcal{G}}
\newcommand{\ba}{\mathbf{a}}
\newcommand{\bb}{\mathbf{b}}
\newcommand{\bc}{\mathbf{c}}
\newcommand{\bd}{\mathbf{d}}
\newcommand{\bg}{\mathbf{g}}
\newcommand{\bw}{\mathbf{w}}
\newcommand{\bx}{\mathbf{x}}
\newcommand{\by}{\mathbf{y}}
\newcommand{\bz}{\mathbf{z}}
\title{Fault Tolerant ML: Efficient Meta-Aggregation and Synchronous Training}
\author{Tehila Dahan}
\author{Kfir Y. Levy}
\affil{Technion - Israel Institute of Technology, Haifa, Israel}
\date{}
\begin{document}

\maketitle

\begin{abstract}%
In this paper, we investigate the challenging framework of Byzantine-robust training in distributed machine learning (ML) systems, focusing on enhancing both efficiency and practicality. As distributed ML systems become integral for complex ML tasks, ensuring resilience against Byzantine failures—where workers may contribute incorrect updates due to malice or error—gains paramount importance. Our first contribution is the introduction of the Centered Trimmed Meta Aggregator (CTMA), an efficient meta-aggregator that upgrades baseline aggregators to optimal performance levels, while requiring low computational demands. Additionally, we propose harnessing a recently developed gradient estimation technique based on a double-momentum strategy within the Byzantine context. Our paper highlights its theoretical and practical advantages for Byzantine-robust training, especially in simplifying the tuning process and reducing the reliance on numerous hyperparameters. The effectiveness of this technique is supported by theoretical insights within the stochastic convex optimization (SCO) framework and corroborated by empirical evidence.
\end{abstract}

\section{Introduction}
In modern machine learning (ML), the paradigm of large-scale distributed training systems
has emerged as a cornerstone for advancing complex ML tasks. Distributed ML approaches enable to significantly accelerate the training process; thus facilitating the practical use of larger, more sophisticated models~\citep{zhao2023survey}.
However, as these systems grow in scale and complexity, they become increasingly susceptible to a range of faults and errors.
Moreover, distributed ML also propels collaborative learning across decentralized data sources~ \cite{bonawitz2019towards}, which often differ in distribution, quality, and volume~\cite{bonawitz2019towards}. For example, data from different geographic locations, devices, or organizations can exhibit considerable variability. This poses a critical challenge: ensuring the training process is resilient to faults and errors in such distributed and heterogeneous environments.  Fault-tolerant training becomes imperative to maintain the integrity, accuracy, and reliability of the learned models, especially when the stakes involve critical decision-making based on ML predictions.

The Byzantine model~\cite{lamport2019Byzantine,guerraoui2023byzantine} provides a robust framework for devising and analyzing fault-tolerant training in distributed ML, due to its capability of capturing both random and adversarial failures. It addresses the challenge posed by Byzantine workers, who may introduce incorrect updates due to unpredictable or malicious behavior, thus risking the training process. Owing to its generality, the Byzantine setting has been widely adopted as a framework for devising and analyzing fault-tolerant ML approaches \citep{blanchard2017machine, yin2018Byzantine}.

Addressing Byzantine failures in distributed machine learning (ML) requires robust aggregation rules to mitigate malicious or faulty updates from workers. Various robust aggregation rules have been developed in the past years, including Coordinate-wise Trimmed Mean (CWTM)  \citep{yin2018Byzantine}, Krum \citep{blanchard2017machine}, Geometric Median (GM) \citep{chen2017distributed}, CWMed \citep{yin2018Byzantine}, and Minimum Diameter Averaging (MDA) \citep{guerraoui2018hidden}. Each such aggregator brings unique qualities that may better address different types of Byzantine failures. 
The diversity in aggregator designs underscores the necessity of having a versatile toolbox, as no single method can efficiently handle all possible Byzantine scenarios. 

The efficiency and robustness of these aggregators can be systematically characterized, as shown in  \citet{karimireddy2020Byzantine, karimireddy2021learning, allouah2023fixing,farhadkhani2022Byzantine}. These studies not only outline the worst-case strengths of various aggregators but also set benchmarks for optimal efficiency, providing a comprehensive analysis of existing algorithms. Furthermore, they introduce the concept of \textbf{Meta-Aggregators}—advanced mechanisms that enhance the robustness of base aggregators by combining or modifying their outputs. Meta-aggregators are devised to bolster the system’s resilience, leveraging the strengths of underlying methods to achieve greater fault tolerance. The first meta-aggregator that aimed to enhance baseline robust aggregators was Bucketing~\citep{karimireddy2020Byzantine}. However, this approach relies heavily on randomization, which can result in unstable performance in practice,  as shown by \citet{allouah2023fixing}. To address this, Nearest Neighbor Mixing (NNM)~\citep{allouah2023fixing} introduced a more effective approach in practice that enhanced the performance of any baseline robust aggregator. Nevertheless, the application of NNM comes with high computational costs, creating a trade-off between enhanced robustness and efficiency. Notably, the computational demand for NNM is $m$ times higher compared to the standard ``average" aggregator, where $m$ is the number of workers. This significant increase in computational demand limits its practicality in high-dimensional scenarios and in situations where the number of workers is large.

The challenge of applying robust aggregators to standard stochastic gradients in synchronous robust training has been a notable concern, as highlighted in the work of \citet{karimireddy2021learning}. The core issue stems from Byzantine workers exploiting the variance in gradient estimations to introduce biases, potentially derailing the learning process. To counteract this, \citet{karimireddy2021learning} have proposed to utilize momentum estimates rather than direct gradient estimates. This approach, underpinned by rigorous theoretical analysis, advocates for each honest machine to maintain a momentum that averages past stochastic gradients across approximately $\sqrt{T}$ iterations, where $T$ is the total number of updates. Such a strategy achieves a significant reduction in variance by a factor of $\sqrt{T}$, effectively minimizing the ``noise blanket" that Byzantine workers could exploit to inject harmful bias into the system \cite{karimireddy2021learning}.

On the downside, implementing this momentum-based method within Byzantine environments unveils a critical limitation: the necessity for precise tuning of both the momentum parameter and the learning rate. This fine-tuning process requires a deep understanding of several factors, including the variance of noise, the fraction of Byzantine workers present, and the performance (or strength) of the chosen robust aggregator. Such prerequisites for effective application pose practical challenges, as they demand prior comprehensive knowledge about the system's specific characteristics and adversarial conditions. This complexity restricts the straightforward applicability of momentum-based approaches in diverse, real-world settings.

In this work, we develop new tools for Byzantine-robust training. Our focus is on enhancing efficiency and practicality. Our contributions: 
\begin{itemize}
\item \textbf{Efficient Meta-Aggregator (CTMA):} We present  ``Centered Trimmed Meta Aggregator (CTMA)", a novel meta-aggregator that upgrades baseline aggregators to optimal performance levels. CTMA offers stable performance in practice compared to Bucketing while significantly reducing computational demands typically associated with NNM, aligning its efficiency with that of the conventional ``average" aggregator. This development makes CTMA suitable for widespread deployment in large-scale scenarios, balancing practical stability with low computational demands and facilitating the implementation of robust, fault-tolerant distributed machine learning systems.
\item \textbf{Advanced Gradient Estimation Technique}: Building on the insight that standard stochastic gradient methods are vulnerable to Byzantine disruptions, we suggest incorporating
a recent double-momentum technique \citep{levy2023mu} to enhance resilience during the training process. 
 Unlike traditional momentum-based approaches that necessitate complex parameter tuning, our approach simplifies the process by allowing the learning rate to be determined solely by the objective's smoothness parameter $L$. This approach not only streamlines the setup but also achieves a more substantial error reduction, further mitigating the impact of Byzantine workers and enhancing the practicality of synchronous robust training in distributed ML systems. We establish theoretical guarantees for this approach within the Stochastic-Convex-Optimization (SCO)--
 a fundamental framework for the design and analysis of machine learning algorithms~\citep{shalev2009stochastic}.
\end{itemize}
We also demonstrate the usefulness of our approach in practice.
\paragraph{Related Work.} 
Addressing the Byzantine problem in distributed ML has evolved significantly, with historical information playing a pivotal role. Early work by \citet{alistarh2018Byzantine} utilizes the entire historical information to mitigate Byzantine faults effectively. Subsequently, momentum-based approaches emerged as a popular and straightforward solution to enhance robustness by leveraging a subset of \(\sqrt{T}\) past gradients \citep{allen2020Byzantine,farhadkhani2022Byzantine,el2021distributed,karimireddy2020Byzantine,karimireddy2021learning}. \citet{karimireddy2021learning} further highlights the necessity of historical information, showing that methods without it fail when addressing strongly convex objectives. This underscores the essential role of using historical data to ensure the robustness and reliability of learning algorithms in Byzantine settings.

In this context, traditional models often assume data homogeneity. However, data heterogeneity—where data across workers vary in distribution—is more common and presents significant challenges. The studies by \citet{allouah2023fixing} and \citet{karimireddy2020Byzantine} are particularly relevant to our work, demonstrating the effectiveness of incorporating momentum and robust aggregators in heterogeneous settings. These studies also pioneered the exploration of meta-aggregators, which are specifically designed to enhance the performance of baseline robust aggregators. \citet{karimireddy2020Byzantine} introduce Bucketing, a meta-aggregator that averages each worker's output with those of its nearest neighbors, identified through a random permutation, and then sends these processed outputs to a robust aggregator. Building on this concept, \citet{allouah2023fixing} propose Nearest Neighbor Mixing (NNM), a meta-aggregator that averages outputs based on their nearest neighbors defined by Euclidean norms before sending them to a robust aggregator.

\section{Setting}
We consider a Heterogeneous distributed setting comprising $m$ workers, where a subset $\GGG$ of them are honest. Conversely to the Homogeneous case, we assume each honest worker $i\in\GGG$ may independently draw i.i.d.~samples from a distribution $\D^{(i)}$, and that the different data distributions may vary between workers. We consider stochastic optimization problems, focusing on smooth convex objective functions given by $f_i:\K \rightarrow \real$ for each honest worker $i$. Our goal is to minimize the joint objective of all honest workers by $f:\K\mapsto\reals$:
\begin{equation*}
    f(\bx):=\frac{1}{|\GGG|}\sum_{i\in\GGG} f_i(\bx):=\frac{1}{|\GGG|}\sum_{i\in\GGG}\E_{\bz^{(i)}\sim\D^{(i)}} f_i(\bx;\bz^{(i)}),
\end{equation*}
where $\K\subseteq\real^d$ is a compact convex set. Thus, the objective is an average of $|\GGG|$ functions $\{f_i:\K\mapsto\reals\}_{i\in\GGG}$, and  each such $f_i(\cdot)$ can be written as an expectation over losses $f_i(\cdot,\bz^{(i)})$ where $\bz^{(i)}$ is drawn from some distribution $\D^{(i)}$ which is unknown to the learner. For ease of notation, in what follows, we will not explicitly denote $\E_{\bz^{(i)}\sim\D^{(i)}}[\cdot]$  but rather use $\E[\cdot]$ to denote the expectation w.r.t.~all randomization.

We consider iterative first-order optimization algorithms that leverage the gradients of  $f(\cdot)$ to generate a series of query points. The final point in this series, denoted as $\bx_T$, serves as an estimation of the optimal solution. The performance of these algorithms is evaluated based on the expected excess loss, defined as:
\begin{equation*}
    \textrm{ExcessLoss}:=\E[f(\bx_T) - f(\bx^*)]~,
\end{equation*}
where $\bx^* \in \arg\min_{\bx\in\K} f(\bx)$. The goal of the workers is to collaboratively ensure a small excess loss w.r.t.~common objective $f(\cdot)$.

We consider a distributed environment, where each worker $i$ has the ability to compute a stochastic gradient oracle $\bg\in\real^d$, given by:
    $\bg^{(i)}:=\nabla f_i(\bx; \bz^{(i)})$~, 
for some $\bz^{(i)}\sim \D^{(i)}$, implying unbiasedness, i.e., $\E[\bg^{(i)}|\bx]=\nabla f_i(\bx)$.  We focus on a centralized framework characterized by a central Parameter Server ($\mathcal{PS}$). In this setup, the $\mathcal{PS}$ communicates with all the workers in the network. Specifically,  our main focus is on synchronous systems, where the $\mathcal{PS}$ waits for outputs from all workers before updating its global vector and then distributes this updated vector to the workers for their next step; in the spirit of minibatch-SGD~\cite{dekel2012optimal}.

Lastly, we consider scenarios where up to a fraction of \( \delta < 1/2 \) are Byzantine workers exhibiting problematic behaviors, such as sending arbitrary or malicious information during the training process. These "Byzantine" workers may even collude to disrupt the training process. The identities of these workers are unknown. We denote \( \GGG \subseteq [m] \) as the set of honest workers, with a size of \( |\GGG| \geq (1 - \delta)m \). Conversely, \( \B \subseteq [m] \) represents the set of Byzantine workers, whose size is \( |\B| \leq \delta m \).

\paragraph{Notation.} Throughout, $\lVert\cdot\rVert$ represents the $L_2$-norm, and for any natural $N$, we define $\left[N\right]=\left\{1,\ldots,N\right\}$. We use a compressed sum notation, where $\alpha_{1:t}= \sum_{k=1}^{t}{\alpha_k}$. In addition, for every $\bx\in\mathbb{R}^d$, we denote the orthogonal projection of $\bx$ onto a set $\K$ by \(\Pi_\K(\bx)= \arg\min_{\by \in \K} \|\by - \bx\|^2\).

\paragraph{Assumptions.} $\forall i\in\GGG$, we use the following assumptions: \\
\textbf{Bounded Diameter}: we assume there exists $D>0$ such,
\begin{equation}
\label{eq:bounded_diameter}
    \max_{\bx,\by\in\K}\|\bx-\by\|\leq D~.
\end{equation}
\textbf{Bounded Variance}:~there exists $\sigma>0$ such that,
\begin{align} \label{eq:bounded-variance}
	\E\|\nabla f_i(\bx;\bz^{(i)}) - \nabla f_i(\bx)\|^2\leq \sigma^2~;\quad \forall \bx\in\K~.
\end{align}
\textbf{Heterogeneity}:~
there exists $\xi>0$ such that for any $\bx\in \K$,
\begin{align} 
  \frac{1}{|\GGG|}\sum_{i\in\GGG}\| \nabla f_i(\bx)-\nabla f(\bx)\|^2 \leq \xi^2 ~.
\label{eq:hetro}
\end{align} 
\textbf{Expectation over Smooth Functions}:~
we assume that $f_i(\cdot)$ is an expectation of smooth functions, i.e.~ $\forall \bx,\by\in\K~, \bz^{(i)}\in \textbf{Support}\{\D^{(i)}\} $ there exist $L>0$ such that,
\begin{align} \label{eq:Main}
\|\nabla f_i(\bx;\bz^{(i)}) - \nabla f_i(\by;\bz^{(i)})\| \leq L\|\bx-\by\|~,
\end{align} 
The above assumption also implies that the expected loss $f_i(\cdot)$ and the averaged loss $f(\cdot)$ are $L$ smooth. \\
\textbf{Bounded Smoothness Variance} \citep{levy2023mu}:~
note that the assumption that we make in Eq.~\eqref{eq:Main} implies that, $\forall \bx,\by\in\K$ there exists $\sigmal^2 \in[0,L^2]$ such,
\begin{align} 
\begin{split}
    \E\left\|(\nabla f_i(\bx;\bz)-\nabla f_i(\bx)) - (\nabla f_i(\by;\bz)-\nabla f_i(\by))\right\|^2
    \leq \sigmal^2 \|\bx-\by\|^2 ~. 
\end{split}
\label{eq:sigmal}
\end{align}  In Appendix \ref{sec:sigmal} we show that   Eq.~\eqref{eq:Main} implies Eq.~\eqref{eq:sigmal}.

\section{Robust Aggregation and Meta-Aggregators}
In Byzantine-free distributed environments, the server typically employs gradient averaging as an aggregation rule~\cite{dekel2012optimal}. This approach is particularly effective due to its ability to execute gradient computations in parallel and efficiently reduce the variance of noisy gradients. However, in Byzantine settings, such averaging may completely fail the training process, even in the face of a single Byzantine worker. Thus, it is crucial for the server to employ a robust aggregation rule; which leads to the following update,
\begin{align*}
    \bw_{t+1} = \Pi_{\K}\left(\bw_{t} -\eta \A \left(\bg_t^{(1)}, \ldots, \bg_t^{(m)}\right)\right)~.
\end{align*}
Here \(\A\) represents  a robust aggregation function applied to the gradients \(\bg_t^{(1)}, \ldots, \bg_t^{(m)}\) from the $m$ workers at time \(t\), and $\eta$ is the learning rate.

In the context of robust methods, several works \citep{allouah2023fixing, karimireddy2021learning,karimireddy2020Byzantine, farhadkhani2022Byzantine} have developed and characterized a range of robust aggregation rules. The key property of such aggregators is their ability to limit the variance between the output of a robust aggregator and the average output of honest workers.  In our discussion, we adopt the robust definition of \citet{karimireddy2021learning} with a few modifications. This refined definition, detailed in Definition \ref{def1}, guarantees that the expected deviation of an aggregation rule from the average output of honest workers is limited to \(c_\delta \geq 0\) times the variance of these workers' outputs.

\begin{definition} \textnormal{\( (c_\delta, \delta) \)-\textbf{robust}}.
\label{def1}
Assume we have $m$ random vectors $\bx_1, \ldots, \bx_m \in \real^d$. Also assume we have an "honest" subset \(\GGG \subseteq [m]\), implying
$\{\bx_i\}_{i\in \GGG}$ are independent of each other.  Finally, assume that there exists $\delta\in [0,1/2)$ such that $|\GGG| \geq (1-\delta)m$. Moreover, assume that for any $i\in\GGG$ there exist $\rho_i\geq 0$ such that, 
\begin{gather*}
    \E \|{\bx_i - \Bar{\bx}_\GGG}\|^2 \leq \rho_i^2~,
\end{gather*}
where \ $\bar{\bx}_\GGG := \frac{1}{|\GGG|} \sum_{i \in \GGG} {\bx}_i$. Then an aggregation rule $\A$ is called \( (c_\delta, \delta) \)-robust is for any such $m$ random vectors it outputs $\hat{\bx}\gets \A(\bx_1, \ldots, \bx_m)$ such that,
\begin{gather*}
    \E\|{\hat{\bx} - \bar{\bx}_\GGG} \|^2 \leq c_\delta \rho^2~,
\end{gather*}
where $\rho^2:=\frac{1}{|\GGG|}\sum_{i\in\GGG}\rho_i^2$, and the expectation w.r.t.~$\{\bx_i\}_{i=1}^m$ and (possible) randomization in  $\A$.
\end{definition}

 Note that \cite{allouah2023fixing} has provided a different definition, capturing a rich family of robust aggregators (see Def.2 in \cite{allouah2023fixing}). In Lemma \ref{lem:kappa-robust}, we show that any aggregator satisfying their definition will also satisfy our definition above.
    Thus, upper bounds of \(c_\delta\) for various aggregation rules are based on the findings in \cite{allouah2023fixing} and are elaborated in Table \ref{tab:agg}. Furthermore, \cite{allouah2023fixing} has established that a lower bound for $c_\delta$ is $\frac{\delta}{1-2\delta}$ and thus, the optimal rate of $c_\delta < O(\delta)$ is achieved when $\delta < 1/3$. Additionally, this optimal rate is effectively achieved by the CWTM aggregator, as demonstrated in Table \ref{tab:agg}. The performance of the other aggregators in Table \ref{tab:agg} is sub-optimal.
\begin{table}
    \centering
    \begin{tabular}{|l|c|c|c|c|c|c|}
        \hline
        Aggregation & CWTM & Krum& GM*  & CWMed & Avg. \\
        \hline
        \({c_\delta}\) & \( \frac{\delta}{{1-2\delta}} \left( 1 + \frac{\delta}{{1-2\delta}}\right) \) & \( 1 + \frac{\delta}{{1-2\delta}} \) & \( \left(1 + \frac{\delta}{{1-2\delta}}\right)^2 \) & \( \left(1 + \frac{\delta}{{1-2\delta}}\right)^2 \) & - \\
        \hline
        Computational cost & \( O(d m \log m) \) & \( O(d m^2) \) & \( O(dm + d\epsilon^{-2}) \) & \( O(dm\log{m}) \) & \( O(dm) \) \\
        \hline
    \end{tabular}
    \caption{Summary of aggregation rules (CWTM \citep{yin2018Byzantine}, Krum \citep{blanchard2017machine}, GM* ($\epsilon$-approximate GM) \citep{chen2017distributed, acharya2022robust}, CWMed \citep{yin2018Byzantine}) with respective \(c_\delta\) values and computational costs.}
    \label{tab:agg}
\end{table}

\subsection{Centered Trimmed Meta Aggregator (CTMA)}
\label{sec:ccra}

To address the sub-optimal performance of \((c_\delta,\delta)\)-robust aggregators, previous works have proposed "meta-aggregators," which enhance the performance of baseline sub-optimal aggregators. Concretely, such meta-aggregators employ a baseline $(c_\delta,\delta)$-robust aggregators in a black box manner to yield an improved robust aggregator with a factor of $O(\delta(1+c_\delta))$ instead of $c_\delta$. Thus, even if $c_\delta = O(1)$ (and therefore sub-optimal); then after meta-aggregation we receive an order optimal factor of $O(\delta)$. This, in turn, enables the crafting of a wide class of optimal aggregators, which can be highly beneficial in practice.

Currently, two approaches exist for meta-aggregation: Bucketing \citep{karimireddy2020Byzantine} Nearest Neighbor Mixing (NNM) \citep{allouah2023fixing}. Bucketing was the first proposed meta-aggregator, but it proved to be less stable in practice due to its heavy reliance on randomization \citep{allouah2023fixing}. To address this, NNM was introduced as a more practical alternative; however, its computational complexity is notably high, requiring $O(d m^2)$ computations in the worst case. This is substantially higher compared to the computational cost of many prevalent robust aggregators, as shown in Table \ref{tab:agg}, and limits its practicality in large-scale (large \(d\)) and massively parallel (large \(m\)) scenarios. 

To mend the above issue, we propose an alternative meta-aggregator in Algorithm ~\ref{alg:ctma}, named Centered Trimmed Meta Aggregator (CTMA). CTMA achieves guarantees comparable to NNM and Bucketing while requiring a computational cost of $O(dm + m\log m)$,  which is similar (up to a logarithmic factor) to that of the standard average aggregator. Later in this paper, we will also demonstrate its practical effectiveness.

\begin{algorithm}
\caption{Centered Trimmed Meta Aggregator (CTMA)}
\label{alg:ctma}
\begin{algorithmic}[t]
\State \textbf{Input:}  Sequence of vectors \( \bx_1, \bx_2, \ldots, \bx_m \), a $(c_\delta,\delta)$-robust aggregator \( \A \), $\delta\in[0,1/2)$.
\State \textbf{Initailize:} $\bx_0\gets \A(\bx_1, \ldots, \bx_m)$. 
\State Sort the sequence \(\{ \|\bx_i - \bx_0\| \}_{i=1}^m\) in non-decreasing order.
\State  Define \( S \gets \) set of indices corresponding to the first $(1-\delta)m$ elements in the sorted sequence.
\State Compute \( \hat{\bx} = \frac{1}{|S|} \sum_{i \in S} \bx_i \).
\State \Return \( \hat{\bx} \)
\end{algorithmic}
\end{algorithm}

As outlined in Algorithm \ref{alg:ctma}, the CTMA process begins by sorting the sequence \(\{ \|\bx_i - \bx_0\| \}_{i=1}^m\) in non-decreasing order. It then computes the average of the first \( (1-\delta)m \) vectors in this ordered sequence, selecting those based on their minimal deviation from the robust aggregator \(\bx_0\). The idea behind this approach is to redefine the role of the robust aggregator \(\bx_0\). Instead of using \(\bx_0\) directly, it serves as an anchor point representing a robust reference. By averaging its nearest neighbors, this method leverages the strength of the collective while preserving robustness, leading to enhanced performance.

It is immediate to see that Algorithm~\ref{alg:ctma} requires $O(dm+m\log m)$ computations, in addition to computing $\bx_0$. Furthermore, as a post-processing approach, CTMA can be effectively combined with either NNM or Bucketing with only an additional logarithmic factor at most, offering a versatile solution in optimizing robust aggregation processes. Note that the CTMA can be applied to a wide range of optimization problems, including non-convex problems.

\begin{lemma}
\label{lem:CTMA}
Under the assumptions outlined in Definition \ref{def1}, if CTMA receives a $(c_\delta, \delta)$-robust aggregator, $\A$; then the output of CTMA, $\hat{\bx}$, is  $(16\delta(1+c_\delta), \delta)$-robust.
\begin{proof}
For simplicity lets assume w.l.o.g.~ that $|\GGG| = (1-\delta) m= |S|$. At Remark \ref{remark:ctma}, we describe how to extend to the general case where $|\GGG| \geq (1-\delta) m = |S|$.

We denote $\by_i := \bx_i -\bx_0$.
\begin{align*}
    &\hat{\bx} - \bar{\bx}_\GGG = \frac{1}{|S|} \sum_{i \in S}\bx_i - \bar{\bx}_\GGG  \\
   &= \bx_0  - \bar{\bx}_\GGG + \frac{1}{|\GGG|} \sum_{i \in S} (\bx_i - \bx_0) \quad (|S|=|\G|) \\ &= -\frac{1}{|\GGG|} \sum_{i \in \GGG} \by_i + \frac{1}{|\GGG|} \sum_{i \in S} \by_i     \\
    &= -\frac{1}{|\GGG|} \sum_{i \in \GGG} \by_i + \frac{1}{|\GGG|} \sum_{i \in \GGG} \by_i
     - \frac{1}{|\GGG|} \sum_{i \in \GGG\backslash S} \by_i + \frac{1}{|\GGG|} \sum_{i \in S\backslash \GGG} \by_i \\
    &= - \frac{1}{|\GGG|} \sum_{i \in \GGG\backslash S} (\bx_i - \bx_0) + \frac{1}{|\GGG|} \sum_{i \in S\backslash \GGG} (\bx_i - \bx_0).
\end{align*}
Taking the squared norm of both sides and applying the Jensen’s inequality, we obtain:
\begin{align*}
    &\|\hat{\bx} - \bar{\bx}_\GGG\|^2 = \left\| - \frac{1}{|\GGG|} \sum_{i \in \GGG\backslash S} (\bx_i - \bx_0) + \frac{1}{|\GGG|} \sum_{i \in S\backslash \GGG} (\bx_i - \bx_0) \right\|^2 \\
    &\leq \frac{2|S\backslash \GGG|}{|\GGG|^2} \sum_{i \in S\backslash \GGG} \|\bx_i - \bx_0\|^2 + \frac{2|\GGG\backslash S|}{|\GGG|^2} \sum_{i \in \GGG\backslash S} \|\bx_i - \bx_0\|^2 ~.
\end{align*}
Note that $|\GGG\backslash S| = |\GGG\cup S|-|S| \leq m - |\GGG| = |\B|$ and in a similar way $|S\backslash \GGG| \leq |\B|$. Therefore,  
\begin{align*}
\|\hat{\bx} - \bar{\bx}_\GGG\|^2 & \leq \frac{4\delta}{|\GGG|} \sum_{i \in S\backslash \GGG} \|\bx_i - \bx_0\|^2 + \frac{4\delta}{|\GGG|} \sum_{i \in \GGG\backslash S} \|\bx_i - \bx_0\|^2~.
\end{align*}
Next, we show that there exists an injective function $\Phi:S\backslash\GGG\to \GGG\cap \bar{S}$. To achieve this, we first denote \( |\B \cap S| = q \). It is important to note that \( \B \cap S = S\setminus\GGG \), which leads us to derive the bound for \( |\GGG \cap \bar{S}| \):
\begin{align*}
    |\bar{S} \cap \GGG| &= |\bar{S}| - |\bar{S} \cap \B| = |\B| - |\B\setminus (S \cap \B)| \\ & = |\B| - (|\B| - |S \cap \B|) = q~.
\end{align*}
Consequently, it implies that \( |\GGG \cap \bar{S}| = |S\setminus\GGG|=q \). Therefore, we can assert the existence of an injective function \( \Phi: S\setminus\GGG \to \GGG \cap \bar{S} \). Further, by the definition of the set $S$, \(\forall i\in \GGG \cap \bar{S} \), \(\forall j\in S\setminus\GGG \) we have that $\|\bx_i-\bx_0\|\geq\|\bx_j-\bx_0\|$. Thus,
\begin{align*}
&\|\hat{\bx} - \bar{\bx}_\GGG\|^2  \leq \frac{4\delta}{|\GGG|}(  \sum_{i \in S\backslash \GGG} \|\bx_{\Phi(i)} - \bx_0\|^2 +  \sum_{i \in \GGG\backslash S} \|\bx_i - \bx_0\|^2) \\ & \leq \frac{8\delta}{|\GGG|} \sum_{i \in \GGG} \|\bx_i - \bx_0\|^2 ~.
\end{align*}
Taking the expectations of both sides gives us the following:
\begin{align*}
&\E\|\hat{\bx} - \bar{\bx}_\GGG\|^2  \leq \frac{8\delta}{|\GGG|} \sum_{i \in \GGG} \E\|\bx_i - \bx_0\|^2 \\ & \leq \frac{16\delta}{|\GGG|} \sum_{i \in \GGG} \E\|\bx_i - \Bar{\bx}_\GGG\|^2 + \frac{16\delta}{|\GGG|} \sum_{i \in \GGG} \E\|\Bar{\bx}_\GGG - \bx_0\|^2 \\  & \leq {16\delta}\rho^2 + 16\delta c_\delta \rho^2  =  16\delta(1+c_\delta) \rho^2~,
\end{align*}
where the second inequality uses $\|\ba+\bb\|^2\leq 2\|\ba\|^2+2\|\bb\|^2$, which holds $\forall \ba,\bb\in\real^d$.  The last inequality stems from the assumption in Def.~\ref{def1}. 
\begin{remark}
\label{remark:ctma}
    CTMA can also receive an upper bound estimate of \(\delta\) and adjust the size of \( S \) to \((1-\delta)m\). In our analysis, we assume, without loss of generality, that this estimate matches the true size of \(\B\). Should this not be the case, our proof can alternatively rely on a subset of \(\GGG\) of size \((1-\delta)m\) without affecting the validity of our conclusions.
\end{remark}
\end{proof}
\end{lemma}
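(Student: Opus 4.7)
The plan is to bound $\|\hat{\bx}-\bar{\bx}_\GGG\|^2$ by decomposing through the anchor $\bx_0=\A(\bx_1,\ldots,\bx_m)$, exploiting the identity $|S|=(1-\delta)m=|\GGG|$ (as in the remark, the case $|\GGG|>(1-\delta)m$ reduces to equality by passing to a size-$(1-\delta)m$ subset of $\GGG$). First I would write
\[
\hat{\bx}-\bar{\bx}_\GGG=\tfrac{1}{|S|}\sum_{i\in S}(\bx_i-\bx_0)-\tfrac{1}{|\GGG|}\sum_{i\in\GGG}(\bx_i-\bx_0),
\]
so that the common indices in $S\cap\GGG$ cancel and only contributions from the symmetric difference $(S\setminus\GGG)\cup(\GGG\setminus S)$ survive. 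Squaring, applying $\|\ba+\bb\|^2\leq 2\|\ba\|^2+2\|\bb\|^2$ and Jensen with the cardinality bounds $|S\setminus\GGG|,|\GGG\setminus S|\leq|\B|\leq\delta m$, reduces the task to controlling the sums $\sum_{i\in S\setminus\GGG}\|\bx_i-\bx_0\|^2$ and $\sum_{i\in\GGG\setminus S}\|\bx_i-\bx_0\|^2$ in terms of honest quantities.

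Next I would exploit the selection rule defining $S$. The key combinatorial step is to verify $|S\setminus\GGG|=|\GGG\setminus S|$ (both equal $|\B\cap S|$), which permits constructing an injection $\Phi:S\setminus\GGG\to\GGG\setminus S$. Because $S$ consists of the $(1-\delta)m$ vectors nearest to $\bx_0$, every $i\in S$ satisfies $\|\bx_i-\bx_0\|\leq\|\bx_j-\bx_0\|$ for every $j\notin S$; in particular, for each $i\in S\setminus\GGG$ (byzantine workers erroneously kept) we get $\|\bx_i-\bx_0\|\leq\|\bx_{\Phi(i)}-\bx_0\|$ with $\Phi(i)\in\GGG\setminus S$ (honest workers erroneously discarded). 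This lets me replace the byzantine sum by an honest sum, giving a bound of the form $\tfrac{8\delta}{|\GGG|}\sum_{i\in\GGG}\|\bx_i-\bx_0\|^2$.

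Finally I would convert the right-hand side into the desired $\rho^2$ form. Adding and subtracting $\bar{\bx}_\GGG$ inside the norm and using $\|\ba+\bb\|^2\leq 2\|\ba\|^2+2\|\bb\|^2$ again splits the bound into two expectations: a variance term $\tfrac{1}{|\GGG|}\sum_{i\in\GGG}\E\|\bx_i-\bar{\bx}_\GGG\|^2\leq\rho^2$, by the hypotheses of Definition~\ref{def1}, and an anchor term $\E\|\bar{\bx}_\GGG-\bx_0\|^2\leq c_\delta\rho^2$, by the $(c_\delta,\delta)$-robustness of $\A$. Summing produces the claimed factor $16\delta(1+c_\delta)$.

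The main obstacle is the injection argument: one must carefully count $|\B\cap S|$ and $|\GGG\cap\bar{S}|$ to conclude they coincide, which is what makes the pointwise domination $\|\bx_i-\bx_0\|\leq\|\bx_{\Phi(i)}-\bx_0\|$ actionable and turns the worst-case byzantine deviation into an honest-only quantity. Everything else is then a routine application of Jensen's inequality, the triangle inequality, and Definition~\ref{def1}.
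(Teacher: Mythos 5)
Your proposal is correct and follows essentially the same route as the paper's proof: the same cancellation over $S\cap\GGG$ leaving only the symmetric difference, the same counting argument $|S\setminus\GGG|=|\GGG\setminus S|=|\B\cap S|$ yielding the injection $\Phi$, the same use of the selection rule to dominate the Byzantine sum by an honest one, and the same final split into the $\rho^2$ variance term and the $c_\delta\rho^2$ anchor term. No substantive differences to report.
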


\section{Synchronous Robust Training}
Previous works have shown that applying a robust-aggregator to the standard stochastic gradients is bound to fail~\citep{karimireddy2020Byzantine}. This is due to the fact that Byzantine workers may make use of the ``noise blanket" of standard gradient estimators in order to inject harmful bias, which may totally fail the learning process.  Nevertheless, \citet{karimireddy2020Byzantine,allouah2023fixing} have devised an elegant and simple solution to this issue in the form of aggregating momentum estimates rather than gradient estimates. Concretely, the theoretical analysis in this work suggests that each (honest) worker maintains a momentum that effectively averages the past $\approx \sqrt{T}$ stochastic gradients, which leads to a variance reduction by a factor of $\sqrt{T}$. This shrinking of the ``noise blanket"  limits the harmful bias that may be injected by Byzantine workers. Here we take a similar approach. Nevertheless, instead of using the momentum, we make use of a recent approach called $\mu^2$-SGD \citep{levy2023mu}, which enables a more aggressive error reduction of factor $t$ compared to standard gradient estimates. One benefit of this approach over utilizing momentum, is that it will allow us to pick the momentum parameter in a completely parameter-free manner, and pick the learning rate only based on the smoothness  $L$; Conversely, previous methods that rely on momentum require picking both the momentum and the learning rate based on $L$, on $c_\delta$ and on the variance of stochastic gradients~\citep{karimireddy2020Byzantine,allouah2023fixing}.
Next, we overview this approach and describe and analyze it in the context of Synchronous Robust training.

\subsection{$\mu^2$-SGD}
The $\mu^2$-SGD is a variant of standard SGD with several modifications. Its update rule is of the following form, $\bw_1=\bx_1\in\K$, and $\forall t>1$,
\begin{gather*}
    \label{eq:mu2sgd}
    \bw_{t+1}=\Pi_{\K}\left({\bw_{t}- \eta\alpha_{t} \bd_{t}}\right), \
    \bx_{t+1}=\frac{1}{\alpha_{1:t+1}} \sum_{k\in[t+1]} \alpha_k \bw_k~.
\end{gather*}
Here, $\{\alpha_t>0\}_t$ are importance weights that may unequally emphasize different update steps; concretely we will employ
$\alpha_t\propto t$, which puts more emphasis on the more recent updates. Moreover, the $\{\bx_t\}_t$'s are a sequence of weighted averages of the iterates $\{\bw_t\}_t$, and $\bd_t$ is an estimate for the gradient at the average point, i.e.~of $\nabla f(\bx_t)$. This is different than standard SGD, which employs estimates for the gradients at the iterates, i.e.~of $\nabla f(\bw_t)$.  This approach is related to a technique called Anytime-GD~\citep{cutkosky2019anytime}, which is strongly-connected to the notions of momentum and acceleration~\citep{cutkosky2019anytime, kavis2019unixgrad}. 

While in the natural stochastic version of Anytime-GD, one would use the estimate $\nabla f(\bx_t;\bz_t)$;  the $\mu^2$-SGD approach suggests to employ a variance reduction mechanism to yield a \emph{corrected momentum} estimate $\bd_t$. This is done as follows: $\bd_1: =\nabla f(\bx_1;\bz_1)$, and $\forall t>2$,
\begin{equation*}
\bd_{t}=\nabla f(\bx_{t};\bz_{t}) + (1-\beta_{t})(\bd_{t-1} - \nabla f(\bx_{t-1};\bz_{t}))~,
\end{equation*}
where $\beta_t\in[0,1]$ are called \emph{corrected momentum} weights.
It can be shown by induction that \(\E[\bd_{t}] = \E[\nabla f(x_t)]\); however, in general, \(\E[\bd_{t} \mid x_t] \neq \nabla f(x_t)\) (in contrast to standard SGD estimators). Nevertheless, \citet{levy2023mu} demonstrated that by choosing \emph{corrected momentum} weights \(\beta_t := 1/t\), the above estimate enjoys an aggressive error reduction. Specifically, \(\E\|\varepsilon_t\|^2 := \E\|\bd_t - \nabla f(\bx_t)\|^2 \leq O(\tsigma^2/t)\) at step \(t\), where \(\tsigma^2 \leq O(\sigma^2 + D^2 \sigma_L^2)\). Implying that the variance decreases with $t$, contrasting with standard SGD where the variance  $\E\|\varepsilon^{\textnormal{SGD}}_t\|^2 := \E\|\bg_t - \nabla f(\bx_t)\|^2$ remains uniformly bounded.

\subsection{Synchronous Robust \(\mu^2\)-SGD}
We consider a synchronous training approach in the spirit of Minibatch-SGD~\cite{dekel2012optimal}, where the $\mathcal{PS}$ updates its global vector by aggregating information from all workers. This aggregation occurs only after receiving outputs from all workers. Afterward, the $\mathcal{PS}$ sends the updated global vector back to each worker, allowing them to compute their next output.

\label{sec:sync}

\begin{algorithm}[t]
\caption{Synchronous Robust \(\mu^2\)-SGD}\label{alg:sync}
\begin{algorithmic}
    \State \textbf{Input:} learning rate \(\eta_t > 0\), starting point \(\bx_{1} \in \K\), steps number \(T\), importance weights \(\{\alpha_t\}_t\), corrected momentum weights \(\{\beta_t\}_t\), \((c_\delta, \delta)\)-robust aggregation function \(\mathcal{A}\).
    \State \textbf{Initialize:} set \(\bw_1 = \bx_1\), draw \(\bz_1^{(i)} \sim \mathcal{D}^{(i)}\), set \(\bd_1^{(i)}=\nabla f(\bx_1;\bz_1^{(i)})\), \(\forall i \in [m]\).
    \For{\(t = 1, \ldots, T\)} \quad \(\triangleright\) \textit{server update}
        \begin{gather*}
            \bw_{t+1} = \Pi_{\mathcal{K}}\left(\bw_{t} - \eta\alpha_{t} \mathcal{A}\left(\bd_{t}^{(1)}, \ldots, \bd_{t}^{(m)}\right)\right) \\
            \bx_{t+1} = \frac{1}{\alpha_{1:t+1}} \sum_{k \in [t+1]} \alpha_k \bw_k
        \end{gather*}
        \For{\(i = 1, \ldots, m\)} \quad \(\triangleright\)  \textit{worker update}
            \State draw \(\bz_{t+1}^{(i)} \sim \mathcal{D}\), compute \(\bg_{t+1}^{(i)} = \nabla f(\bx_{t+1};\bz_{t+1}^{(i)})\), \(\Tilde{\bg}_{t}^{(i)} = \nabla f(\bx_{t};\bz_{t+1}^{(i)})\),
            \State and update:
            \begin{equation*}
                \bd_{t+1}^{(i)} = \bg_{t+1}^{(i)} + (1 - \beta_{t+1})(\bd_{t}^{(i)} - \Tilde{\bg}_{t}^{(i)}) 
            \end{equation*}
        \EndFor
    \EndFor
    \State \textbf{Output:} \(\bx_T\)
\end{algorithmic}
\end{algorithm}

Algorithm \ref{alg:sync} describes the adapted  \(\mu^2\)-SGD algorithm for the synchronous Byzantine setting. In this approach, each worker computes its own corrected momentum $\bd_t^{(i)}$ independently using its own data samples:
\begin{equation*}
\label{eq:storm-worker}
    \bd^{(i)}_{t}=\nabla f_i(\bx_{t};\bz^{(i)}_{t})+(1-\beta_{t})(\bd^{(i)}_{t-1}-\nabla f_i(\bx_{t-1};\bz^{(i)}_{t}))~.
\end{equation*}
Once the outputs from all workers are received, the $\mathcal{PS}$ then proceeds to update the new query point   \(\bx_{t+1}\):
\begin{align*}
\label{eq:anytime-sync}
    &\bw_{t+1}=\Pi_{\K}\left({\bw_{t}- \eta\alpha_{t} \A\left(\bd_{t}^{(1)} \ldots, \bd_{t}^{(m)}\right) }\right),\\
    &\bx_{t+1}=\frac{1}{\alpha_{1:t+1}} \sum_{k\in[t+1]} \alpha_k \bw_k~,
\end{align*}
where we are assumed that $\A\left(\bd_{t}^{(1)}, \ldots, \bd_{t}^{(m)}\right)$  is a \((c_\delta, \delta)\)-robust aggregator (Def.~\ref{def1}).

The motivation for utilizing the \(\mu^2\)-SGD algorithm in the Byzantine setting lies in its dual mechanism, which incorporates weighted averaging over the entire history of both query points and gradients. This allows the \(\mu^2\)-SGD algorithm to perform careful gradient steps with small changes, thereby enhancing the stability of the gradient descent process. This stability leads to aggressive stochastic variance reduction, which can be valuable for identifying Byzantine noise (or malicious workers) over time.

\begin{theorem}
\label{thm:Main}
For each worker \(i \in \GGG\), assume a convex function \(f_i:\K\mapsto\real\)  on a convex set \(\K\) with bounded diameter \(D\). Under the assumptions given in Equations~\eqref{eq:bounded-variance}, \eqref{eq:Main}, and \eqref{eq:sigmal}, the implementation of Algorithm~\ref{alg:sync} with parameters \(\{\alpha_t = t\}_t\) and \(\{\beta_t = 1/t\}_t\) guarantees the following for every \(t \in [T]\) and each honest worker \(i \in \GGG\):
\begin{align*}
    &\E\left\|\varepsilon^{(i)}_t\right\|^2 = \E\left\|\bd^{(i)}_t -  \nabla f_i(\bx_t)\right\|^2 \leq  \tsigma^2 / t ~,\\ &\E\left\|\frac{1}{|\GGG|}\sum_{i\in\GGG}\varepsilon^{(i)}_{t}\right\|^2  \leq \tsigma^2 / t|\GGG|~,
    \end{align*}
where \(\varepsilon^{(i)}_{t} = \bd^{(i)}_{t}-\nabla f_i(\bx^{(i)}_{t})\) and \(\tsigma^2:=2\sigma^2 + 8D^2 \sigma_L^2\).
\end{theorem}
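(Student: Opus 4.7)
The plan is to derive a recursive one-step bound on the per-worker error $\varepsilon_t^{(i)}$ and then solve the recursion by induction. All the work is standard STORM/$\mu^2$-SGD bookkeeping; the only Byzantine-specific content is that we run the estimator \emph{per honest worker}, so the argument is applied coordinate-wise in $i \in \mathcal{G}$ and the independence across workers is used only at the very end for the averaged bound.

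First I would rewrite the update for $\bd_t^{(i)}$ by adding and subtracting $\nabla f_i(\bx_{t+1})$ and $(1-\beta_{t+1})\nabla f_i(\bx_t)$ to obtain the decomposition
\begin{equation*}
\varepsilon_{t+1}^{(i)} = (1-\beta_{t+1})\varepsilon_t^{(i)} + M_{t+1}^{(i)},
\end{equation*}
where $M_{t+1}^{(i)} := \beta_{t+1}\bigl(\bg_{t+1}^{(i)}-\nabla f_i(\bx_{t+1})\bigr) + (1-\beta_{t+1})\bigl[(\bg_{t+1}^{(i)}-\nabla f_i(\bx_{t+1}))-(\tilde{\bg}_t^{(i)}-\nabla f_i(\bx_t))\bigr]$. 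With $\mathcal{F}_t$ the filtration generated by all randomness up to \emph{before} drawing $\bz_{t+1}^{(i)}$, both $\bx_{t+1}$ and $\bx_t$ are $\mathcal{F}_t$-measurable, so $\E[M_{t+1}^{(i)}\mid\mathcal{F}_t]=0$. Expanding the squared norm gives the clean recurrence
\begin{equation*}
\E\|\varepsilon_{t+1}^{(i)}\|^2 \leq (1-\beta_{t+1})^2\,\E\|\varepsilon_t^{(i)}\|^2 + \E\|M_{t+1}^{(i)}\|^2.
\end{equation*}

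Next I would control $\E\|M_{t+1}^{(i)}\|^2$. Using $\|a+b\|^2 \leq 2\|a\|^2+2\|b\|^2$, the first piece is bounded by $2\beta_{t+1}^2\sigma^2$ via Eq.~\eqref{eq:bounded-variance}, and the second by $2(1-\beta_{t+1})^2\sigma_L^2\,\E\|\bx_{t+1}-\bx_t\|^2$ via the smoothness-variance assumption in Eq.~\eqref{eq:sigmal} (applied with $\bz=\bz_{t+1}^{(i)}$). For $\|\bx_{t+1}-\bx_t\|$ I would invoke the standard Anytime-averaging identity $\bx_{t+1}-\bx_t=\tfrac{\alpha_{t+1}}{\alpha_{1:t+1}}(\bw_{t+1}-\bx_t)$, which with $\alpha_t=t$ gives the factor $2/(t+2)$, and combine with bounded diameter (Eq.~\eqref{eq:bounded_diameter}) to get $\|\bx_{t+1}-\bx_t\|\leq 2D/(t+2)$. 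Plugging in $\beta_{t+1}=1/(t+1)$ and bounding $(t/(t+1))^2\cdot 4/(t+2)^2 \leq 4/(t+1)^2$ produces
\begin{equation*}
\E\|M_{t+1}^{(i)}\|^2 \leq \frac{2\sigma^2 + 8D^2\sigma_L^2}{(t+1)^2} = \frac{\tilde\sigma^2}{(t+1)^2}.
\end{equation*}

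Now I would solve the recurrence. Multiplying by $(t+1)^2$ gives $(t+1)^2\E\|\varepsilon_{t+1}^{(i)}\|^2 \leq t^2 \E\|\varepsilon_t^{(i)}\|^2 + \tilde\sigma^2$, and telescoping together with the base case $\E\|\varepsilon_1^{(i)}\|^2 \leq \sigma^2 \leq \tilde\sigma^2$ yields $\E\|\varepsilon_t^{(i)}\|^2 \leq \tilde\sigma^2/t$, as required. For the averaged bound, I would note that conditional on $\mathcal{F}_t$ the noises $\{M_{t+1}^{(i)}\}_{i\in\mathcal{G}}$ are independent with zero mean (each honest worker draws $\bz_{t+1}^{(i)}$ independently), so $\bar\varepsilon_{t+1}:=\tfrac{1}{|\mathcal{G}|}\sum_{i\in\mathcal{G}}\varepsilon_{t+1}^{(i)}$ obeys the same recurrence but with the martingale-difference term's variance reduced by a factor $|\mathcal{G}|$ thanks to the cross-term cancellation. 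The identical induction then delivers $\E\|\bar\varepsilon_t\|^2 \leq \tilde\sigma^2/(t|\mathcal{G}|)$.

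The main obstacle I anticipate is a bookkeeping one rather than a conceptual one: packaging the $\|\bx_{t+1}-\bx_t\|$ factor cleanly so that the final constant comes out to exactly $\tilde\sigma^2=2\sigma^2+8D^2\sigma_L^2$ and the recursion $(t+1)^2 a_{t+1}\leq t^2 a_t+\tilde\sigma^2$ holds uniformly from $t=1$. A minor care point is making precise that $\bx_{t+1}$ lies in $\mathcal{F}_t$ (since it is built from $\bw_{t+1}$, which uses $\bd_t^{(i)}$ and hence only $\bz_{s}^{(i)}$ for $s\leq t$), which is what legitimates treating the smoothness-variance bound pointwise at the random pair $(\bx_t,\bx_{t+1})$.
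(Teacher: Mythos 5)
Your proposal is correct and follows essentially the same route as the paper: the same martingale-difference decomposition of the STORM error, the same bound $\E\|M_{t+1}^{(i)}\|^2\leq \tsigma^2/(t+1)^2$ via the Anytime identity for $\|\bx_{t+1}-\bx_t\|$ together with Eqs.~\eqref{eq:bounded-variance} and \eqref{eq:sigmal}, and the same cross-worker independence for the extra $1/|\GGG|$ factor. The only cosmetic difference is that you telescope the one-step recursion $(t+1)^2 a_{t+1}\leq t^2 a_t+\tsigma^2$, whereas the paper unrolls $t\varepsilon_t^{(i)}=\sum_{\tau\in[t]}\MMM_\tau^{(i)}$ and applies martingale orthogonality to the whole sum at once.
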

Theorem \ref{thm:Main} shows that this method is consistent with the guarantees of the non-distributed \(\mu^2\)-SGD \citep{levy2023mu}, preserving the same stochastic error \(\E \left\|\varepsilon^{(i)}_t\right\| := \E\left\|\bd^{(i)}_t -  \nabla f_i(\bx_t)\right\|\) for each honest worker. Furthermore, the collective error across honest workers results in a variance reduction proportional to the number of workers, aligning with the principles of mini-batch SGD \citep{dekel2012optimal}.

As previously discussed, the double mechanism of the \(\mu^2\)-SGD method offers significant variance reduction over the standard mini-batch SGD by minimizing stochastic variance at each step \(t\). Consequently, integrating the \(\mu^2\)-SGD algorithm with a \((c_\delta, \delta)\)-robust aggregator ensures a consistent reduction in variance between the filter's output and the actual gradient,  as demonstrated in Lemma \ref{lem:syncFilter}.

\begin{lemma}
\label{lem:syncFilter}
Under the Byzantine assumption where \(\delta < 1/2\), let \(\A\) be a \((c_\delta, \delta)\)-robust aggregation rule, and let \(f: \K \mapsto \real\) be a convex function, where \(\K\) is a convex set with bounded diameter \(D\). Presuming that the assumptions in Equations~\eqref{eq:bounded-variance}, \eqref{eq:hetro}, \eqref{eq:Main}, and \eqref{eq:sigmal} hold, invoking Algorithm~\ref{alg:sync} with \(\{\alpha_t = t\}_t\) and \(\{\beta_t = 1/t\}_t\) ensures for any \(t \in [T]\):
\begin{gather*}
    \E\left\|\hat{\bd}_t - \nabla f(\bx_t) \right\|^2 \leq   \frac{4\tsigma^2}{tm} + \frac{12c_\delta\tsigma^2}{t} + 6c_\delta\xi^2~,
\end{gather*}
where $\hat{\bd}_t=\A\left(\bd^{(1)}_t, \ldots, \bd^{(m)}_t\right)$ and \(\tsigma^2:=2\sigma^2 + 8D^2 \sigma_L^2\).

\begin{proof}
We start by determining the appropriate value for $\rho$ as outlined in Definition \ref{def1}, where $\bar{\bd}_\GGG := \frac{1}{|\GGG|}\sum_{i\in\GGG} \bd^{(i)}_t$:
\begin{align*}
     &\E \left\|{\bd^{(i)}_t  - \bar{\bd}_\GGG}\right\|^2  
    \\ & \leq 3\E \left\|{\bd^{(i)}_t -  \nabla f_i(\bx_t)}\right\|^2 + 3\E \left\|{ \nabla f(\bx_t) -\bar{\bd}_\GGG }\right\|^2 + 3\E \left\|{ \nabla f_i(\bx_t) -\nabla f(\bx_t) }\right\|^2 \\ &\leq \frac{3\tsigma^2}{t} + \frac{3\tsigma^2}{t|\GGG|} + 3\E \left\|{ \nabla f_i(\bx_t) -\nabla f(\bx_t) }\right\|^2 \\ & \leq \frac{6\tsigma^2}{t} + 3\E \left\|{ \nabla f_i(\bx_t) -\nabla f(\bx_t) }\right\|^2~.
\end{align*}
The first and inequality uses $\|\ba+\bb+\bc\|^2\leq 3\|\ba\|^2+3\|\bb\|^2+3\|\bc\|^2$, which holds $\forall \ba,\bb,\bc\in\real^d$. The second inequality follows Theorem \ref{thm:Main} and the assumption in Eq. \eqref{eq:hetro}. The third inequality is due to the fact that $|\GGG|\geq 1$. Thus,
\begin{align*}
     &\frac{1}{|\GGG|} \sum_{i\in\GGG} \E \left\|{\bd^{(i)}_t  - \bar{\bd}_\GGG}\right\|^2   \\ & \leq \frac{6\tsigma^2}{t} + \frac{3}{|\GGG|} \sum_{i\in\GGG}\E \left\|{ \nabla f_i(\bx_t) -\nabla f(\bx_t) }\right\|^2 \\ & \leq \frac{6\tsigma^2}{t} + 3\xi^2~.
\end{align*}
Accordingly, we set $\rho^2:=\frac{6\tsigma^2}{t} + 3\xi^2$. Following Definition \ref{def1}, we have,
\begin{align*}
   &\E\left\|\hat{\bd}_t - \nabla f(\bx_t) \right\|^2 \leq  
    2\E\left\|\hat{\bd}_t - \bar{\bd}_\GGG \right\|^2 + 2\E\left\|\bar{\bd}_\GGG - \nabla f(\bx_t) \right\|^2 \\
    &\leq 2c_\delta\left(\frac{6\tsigma^2}{t} + 3\xi^2\right) +  2\E \left\|\frac{1}{|\GGG|}\sum_{i\in\GGG}\varepsilon_t^{(i)}\right\|^2  \\
    &\leq \frac{12c_\delta\tsigma^2}{t} +  \frac{2\tsigma^2}{t|\GGG|} + 6c_\delta\xi^2 \\& \leq \frac{12c_\delta\tsigma^2}{t} +  \frac{4\tsigma^2}{tm} + 6c_\delta\xi^2~,
\end{align*}
where the first inequality uses $\|\ba+\bb\|^2\leq 2\|\ba\|^2+2\|\bb\|^2$, which holds $\forall \ba,\bb\in\real^d$. The third follows Theorem \ref{thm:Main}. The last inequality utilizes the fact that $|\GGG|\geq(1-\delta)m\geq m/2$ since $\delta < 1/2$.
\end{proof}

\end{lemma}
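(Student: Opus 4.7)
The plan is to apply the $(c_\delta,\delta)$-robustness guarantee of $\A$ (Definition \ref{def1}) to the honest workers' corrected-momentum vectors $\{\bd^{(i)}_t\}_{i\in\GGG}$, then combine it with the per-worker and honest-average stochastic-error bounds already established in Theorem \ref{thm:Main}. The only real work is to compute the right $\rho^2$ to feed into Definition \ref{def1}.

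First I would split $\hat\bd_t - \nabla f(\bx_t) = (\hat\bd_t - \bar\bd_\GGG) + (\bar\bd_\GGG - \nabla f(\bx_t))$, where $\bar\bd_\GGG := \frac{1}{|\GGG|}\sum_{i\in\GGG}\bd^{(i)}_t$, and apply $\|a+b\|^2\le 2\|a\|^2+2\|b\|^2$ in expectation. The second term is easy: since $\bar\bd_\GGG - \nabla f(\bx_t) = \frac{1}{|\GGG|}\sum_{i\in\GGG}\varepsilon^{(i)}_t$, Theorem \ref{thm:Main} immediately gives $\E\|\bar\bd_\GGG - \nabla f(\bx_t)\|^2 \le \tsigma^2/(t|\GGG|) \le 2\tsigma^2/(tm)$ using $|\GGG|\ge (1-\delta)m\ge m/2$. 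The first term is where the robustness of $\A$ enters.

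To invoke Definition \ref{def1} I would bound $\E\|\bd^{(i)}_t - \bar\bd_\GGG\|^2$ by decomposing into three pieces that separately match the tools at hand:
\begin{equation*}
\bd^{(i)}_t - \bar\bd_\GGG \;=\; \bigl(\bd^{(i)}_t - \nabla f_i(\bx_t)\bigr) \;+\; \bigl(\nabla f_i(\bx_t) - \nabla f(\bx_t)\bigr) \;+\; \bigl(\nabla f(\bx_t) - \bar\bd_\GGG\bigr).
\end{equation*}
Applying $\|a+b+c\|^2\le 3\|a\|^2+3\|b\|^2+3\|c\|^2$, the first piece is bounded by $\tsigma^2/t$ and the third by $\tsigma^2/(t|\GGG|)\le \tsigma^2/t$ (both via Theorem \ref{thm:Main}), while averaging the middle piece over $i\in\GGG$ and invoking the heterogeneity assumption \eqref{eq:hetro} contributes $\xi^2$. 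This yields the uniform choice $\rho^2 \le 6\tsigma^2/t + 3\xi^2$, and then $(c_\delta,\delta)$-robustness gives $\E\|\hat\bd_t - \bar\bd_\GGG\|^2 \le c_\delta(6\tsigma^2/t + 3\xi^2)$.

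Finally I would combine the two pieces through the factor-of-$2$ from the outer Young step to reach $\frac{4\tsigma^2}{tm} + \frac{12 c_\delta \tsigma^2}{t} + 6c_\delta \xi^2$. The main obstacle is recognizing the right three-term decomposition for $\rho_i^2$: naively bounding $\E\|\bd^{(i)}_t - \bar\bd_\GGG\|^2$ in one shot mixes stochastic noise with heterogeneity and loses either the $1/t$ decay (throwing away the $\mu^2$-SGD gain) or the $\xi$-quantified structure. Threading these cleanly so that each sub-term is handled by exactly one hypothesis (Theorem \ref{thm:Main} for the $\mu^2$-SGD errors, \eqref{eq:hetro} for the inter-worker gradient spread) is what makes the constants line up.
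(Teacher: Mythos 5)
Your proposal is correct and follows essentially the same route as the paper's proof: the same outer split into $(\hat\bd_t-\bar\bd_\GGG)+(\bar\bd_\GGG-\nabla f(\bx_t))$, the same three-term decomposition of $\bd^{(i)}_t-\bar\bd_\GGG$ yielding $\rho^2=6\tsigma^2/t+3\xi^2$, and the same use of Theorem~\ref{thm:Main}, Eq.~\eqref{eq:hetro}, and $|\GGG|\geq m/2$ to land on identical constants. No gaps.
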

This lemma breaks down the error between the filter's output and the actual gradient into three components: the collective stochastic error across all honest workers, \(c_\delta\) times the stochastic error for an individual honest worker, and \(c_\delta\) times the heterogeneity variance. In the absence of Byzantine workers (\(\delta = 0\)), \(c_\delta\) can be reduced to zero if \(c_\delta \leq O(\delta)\). This condition can be met by using an appropriate robust aggregator (see Table \ref{tab:agg}) or by applying a meta-aggregator like CTMA with a suitable robust aggregator. This aligns with the ideal $\mu^2$-SGD analysis in a Byzantine-free setting \citep{levy2023mu}.

\begin{theorem}[Synchronous Byzantine $\mu^2$-SGD]\label{thm:muSGD}
Assume $f$ is convex. 
Also, let us make the same assumptions as in Thm.~\ref{thm:Main} and Lemma \ref{lem:syncFilter}, and let us denote $G^*:=\| \nabla f(\bx^*)\|$, where $ \bx^* \in \arg\min_{\bx\in\K} f(\bx)$. Then invoking Algorithm~\ref{alg:sync} with $\{\alpha_t = t\}_t$ and $\{\beta_t=1/t\}_t$, and using a learning rate $\eta\leq \frac{1}{4LT}$ guarantees,
\begin{equation*}
    \E[\Delta_T] \leq O\left(\frac{G^*D+LD^2}{ T}  +  \frac{D\tsigma}{\sqrt{T}}\sqrt{c_\delta + \frac{1}{m}} +\sqrt{c_\delta}D\xi\right)
\end{equation*} 
where $\Delta_T:=f(\bx_T)-f(\bx^*)$ and \(\tsigma^2:=2\sigma^2 + 8D^2 \sigma_L^2\).
\end{theorem}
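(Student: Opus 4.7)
The plan is to mirror the $\mu^2$-SGD analysis of \citet{levy2023mu}, plugging in Lemma~\ref{lem:syncFilter} wherever the variance of the aggregated estimate appears. I would start from the Anytime-GD inequality of \citet{cutkosky2019anytime}: because $\bx_T=\alpha_{1:T}^{-1}\sum_{t=1}^{T}\alpha_t\bw_t$ and $f$ is convex, a telescoping application of convexity through the identity $\alpha_t(\bw_t-\bx^*)=\alpha_{1:t}(\bx_t-\bx^*)-\alpha_{1:t-1}(\bx_{t-1}-\bx^*)$ yields
\begin{equation*}
\alpha_{1:T}\bigl(f(\bx_T)-f(\bx^*)\bigr)\;\leq\;\sum_{t=1}^{T}\alpha_t\bigl\langle\nabla f(\bx_t),\bw_t-\bx^*\bigr\rangle.
\end{equation*}
Writing $\hat{\bd}_t:=\A(\bd_t^{(1)},\ldots,\bd_t^{(m)})$ and $\hat{\varepsilon}_t:=\hat{\bd}_t-\nabla f(\bx_t)$, I split the right-hand side into $\sum_t\alpha_t\langle\hat{\bd}_t,\bw_t-\bx^*\rangle-\sum_t\alpha_t\langle\hat{\varepsilon}_t,\bw_t-\bx^*\rangle$ and handle the two pieces separately.

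For the $\hat{\bd}_t$ piece, the usual projected-OGD one-step inequality applied to $\bw_{t+1}=\Pi_\K(\bw_t-\eta\alpha_t\hat{\bd}_t)$ and telescoped in $t$ gives
\begin{equation*}
\sum_t\alpha_t\langle\hat{\bd}_t,\bw_t-\bx^*\rangle\;\leq\;\frac{D^2}{2\eta}+\frac{\eta}{2}\sum_t\alpha_t^2\|\hat{\bd}_t\|^2.
\end{equation*}
With $\eta=1/(4LT)$ and $\alpha_{1:T}=\Theta(T^2)$, the leading $D^2/(2\eta)$ piece contributes $O(LD^2/T)$ after normalization. I would split the quadratic piece as $\|\hat{\bd}_t\|^2\leq 2\|\nabla f(\bx_t)\|^2+2\|\hat{\varepsilon}_t\|^2$: for the gradient-norm part I would combine $L$-smoothness with the cocoercivity identity $\|\nabla f(\bx_t)-\nabla f(\bx^*)\|^2\leq 2L\bigl(f(\bx_t)-f(\bx^*)\bigr)$ (the Bregman cross term $\langle\nabla f(\bx^*),\bx_t-\bx^*\rangle$ is nonnegative by the KKT optimality condition), and then use the identity $\alpha_t^2=t\alpha_t\leq T\alpha_t$ together with the smallness $4L\eta T\leq 1$ to absorb the induced $\sum_t\alpha_t(f(\bx_t)-f(\bx^*))$ back through the Anytime-GD bound. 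The residual $G^*$-dependence then collapses to the advertised $O(G^*D/T)$ term, and the $\|\hat{\varepsilon}_t\|^2$ part is dominated by the $\|\hat{\varepsilon}_t\|$-linear contribution treated next.

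For the $\hat{\varepsilon}_t$ piece, Cauchy-Schwarz and $\|\bw_t-\bx^*\|\leq D$ yield $\bigl|\sum_t\alpha_t\langle\hat{\varepsilon}_t,\bw_t-\bx^*\rangle\bigr|\leq D\sum_t\alpha_t\|\hat{\varepsilon}_t\|$. Taking expectations and applying Jensen to Lemma~\ref{lem:syncFilter} gives
\begin{equation*}
\E\|\hat{\varepsilon}_t\|\;\leq\;\sqrt{\E\|\hat{\varepsilon}_t\|^2}\;\leq\;\frac{2\tilde{\sigma}}{\sqrt{tm}}+\frac{2\sqrt{3c_\delta}\,\tilde{\sigma}}{\sqrt{t}}+\sqrt{6c_\delta}\,\xi.
\end{equation*}
Using $\alpha_t=t$, $\sum_{t\leq T}\sqrt{t}=\Theta(T^{3/2})$ and $\sum_{t\leq T}t=\Theta(T^2)$, then normalizing by $\alpha_{1:T}$, produces exactly the advertised $O\bigl(D\tilde{\sigma}\,T^{-1/2}\sqrt{c_\delta+1/m}+\sqrt{c_\delta}\,D\xi\bigr)$.

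The hard part will be the quadratic OGD term $(\eta/2)\sum_t\alpha_t^2\|\hat{\bd}_t\|^2$: because $\sum_t\alpha_t^2=\Theta(T^3)$, the crude uniform bound $\|\nabla f(\bx_t)\|\leq G^*+LD$ would leak a non-vanishing $(G^*)^2/L$ constant into the final rate. Recovering the $G^*D/T$ scaling requires the cocoercivity-plus-absorption argument sketched above, and tuning $\eta$ just small enough that $4L\eta T\leq 1$ is precisely what fixes the constants in the theorem statement.
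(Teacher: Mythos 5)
Your overall skeleton (the Anytime-GD reduction, a regret bound for the inner update on $\bw_t$, and Lemma~\ref{lem:syncFilter} fed through Jensen to control the linear error term) matches the paper, and your treatment of $D\sum_t\alpha_t\E\|\hat\varepsilon_t\|$ is essentially the paper's Eq.~\eqref{eq:eps2}. The gap is in the regret step. You invoke the standard projected-OGD bound $\sum_t\alpha_t\langle\hat{\bd}_t,\bw_t-\bx^*\rangle\leq \frac{D^2}{2\eta}+\frac{\eta}{2}\sum_t\alpha_t^2\|\hat{\bd}_t\|^2$, which forces you to square $\hat{\bd}_t$, and two pieces of that square are not controlled by your sketch. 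First, the constant component $\nabla f(\bx^*)$ of $\nabla f(\bx_t)$ contributes $\Theta\bigl(\eta (G^*)^2\sum_t\alpha_t^2\bigr)=\Theta\bigl((G^*)^2T^2/L\bigr)$, i.e.\ a non-vanishing $(G^*)^2/L$ after dividing by $\alpha_{1:T}=\Theta(T^2)$; cocoercivity only handles $\nabla f(\bx_t)-\nabla f(\bx^*)$ and absorption only handles the resulting $\Delta_t$ terms, so neither removes this, and the claimed collapse to $O(G^*D/T)$ does not follow. Second, the $\|\hat\varepsilon_t\|^2$ piece gives $\frac{\eta}{2}\sum_t\alpha_t^2\E\|\hat\varepsilon_t\|^2=\Theta\bigl(\tsigma^2(c_\delta+1/m)/(LT)+c_\delta\xi^2/L\bigr)$ after normalization, and $c_\delta\xi^2/L$ is not dominated by the advertised $\sqrt{c_\delta}D\xi$ unless $\sqrt{c_\delta}\,\xi\lesssim LD$, which is not assumed; your assertion that the quadratic error term ``is dominated by the linear contribution'' is therefore unjustified.

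The paper sidesteps both problems by never squaring $\hat{\bd}_\tau$: it bounds $\sum_\tau\alpha_\tau\langle\hat{\bd}_\tau,\bw_{\tau+1}-\bx^*\rangle\leq\frac{D^2}{2\eta}-\frac{1}{2\eta}\sum_\tau\|\bw_\tau-\bw_{\tau+1}\|^2$ by evaluating the linearized regret at the \emph{next} iterate (Lemma~\ref{lem:RegretBound}), pairs the aggregation error only linearly with $\bw_{\tau+1}-\bx^*$ (yielding $D\sum_\tau\alpha_\tau\|\varepsilon_\tau\|$ and nothing quadratic in $\varepsilon_\tau$), and applies Young's inequality against the retained negative movement term only to $\langle\nabla f(\bx_\tau)-\nabla f(\bx^*),\bw_\tau-\bw_{\tau+1}\rangle$. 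The leftover $\sum_\tau\alpha_\tau\langle\nabla f(\bx^*),\bw_\tau-\bw_{\tau+1}\rangle$ is handled by summation by parts, giving exactly $\alpha_tG^*D$ and hence the $G^*D/T$ term after normalization. To repair your argument you would need to adopt this increment-based decomposition (or add the assumptions $G^*=0$ and $\sqrt{c_\delta}\,\xi\leq LD$, which the theorem does not make).
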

\begin{remark}
    Theorem \ref{thm:muSGD} provides optimal excess loss bound when $c_\delta\leq O(\delta)$ \citep{karimireddy2021learning}. Additionally, this bound is consistent with the optimal analysis in a Byzantine-free setting ($\delta=0$) \citep{levy2023mu, dekel2012optimal}.
\end{remark}

\begin{remark}
    Theorem \ref{thm:muSGD} indicates that the learning rate solely relies on the smoothness parameter $L$. This is in contrast to the standard momentum approach (e.g.,~\citet{karimireddy2020Byzantine,allouah2023fixing}), which necessitates prior knowledge of $L,\sigma$, and $c_\delta$ to set the learning rate and momentum parameter.
\end{remark}

\begin{remark}
In Appendix \ref{par:lr}, we demonstrate that our approach can derive a wide range of learning rates \(\eta \in [\eta_{min}, \eta_{max}]\) that ensure optimal convergence. Unlike the standard momentum method, which requires a low-range of learning rates such that \(\eta_{max}/\eta_{min} = O(1)\) \citep{karimireddy2020Byzantine} to achieve an order optimal bound, our method allows for a broader range of learning rates with \(\eta_{max}/\eta_{min} = O(\sqrt{T})\). This provides significantly greater flexibility in fine-tuning the learning rate parameter.
\end{remark}

\section{Experiments}

In this section, we present the numerical performance of the CTMA and \(\mu^2\)-SGD algorithms. We assess their robustness against \textit{sign-flipping} and \textit{label-flipping} attacks \citep{allen2020Byzantine}, as well as state-of-the-art (SOTA) attacks, including the \textit{empire} attack \citep{xie2020fall} and the \textit{little} attack \citep{baruch2019little}. Additionally, we perform a comparative analysis of CTMA alongside two existing methods: Bucketing and NNM. Furthermore, we explore the effectiveness of employing a double meta-aggregator approach, where NNM serves as the initial meta-aggregator before passing through a robust aggregation phase, followed by the application of CTMA.

Our experiments also measure the performance of the $\mu^2$-SGD algorithm compared to the standard momentum \citep{karimireddy2021learning}, extending the evaluation to examine their resilience across a wide range of learning rates.

We conducted our experiments in a homogeneous setting. Specifically, we utilized the MNIST \citep{lecun2010mnist} dataset, which contains 28x28 pixel grayscale images of handwritten digits, and the CIFAR-10 \citep{krizhevsky2014cifar} dataset, which includes 32x32 color images spanning 10 classes. The detailed experimental setup and the complete results are provided in Appendix \ref{app:exp}. Our results demonstrate consistent outcomes across both the CIFAR-10 and MNIST datasets.

For the code, please visit our GitHub repository\footnote{\url{https://github.com/dahan198/synchronous-fault-tolerant-ml}}.

\paragraph{CTMA Versus Existing Meta Aggregators.} Our results of CTMA show that it often performed as well as, and in some cases even better than, existing meta-aggregators such as Bucketing and NNM for both the momentum and \(\mu^2\)-SGD algorithms. As illustrated in Figures \ref{fig:ctma-vs-meta} and \ref{fig:sota-ctma}, and demonstrated in Appendix \ref{app:exp}, CTMA enhances the performance of baseline robust-aggregators, exhibiting exceptional robustness and accuracy under challenging conditions. However, Figure \ref{fig:ctma-fails} indicates that CTMA may struggle with heavy low-variance attacks, such as \textit{little} and \textit{empire}, which fall under the stochastic "noise blanket" and complicate the identification of adversarial workers, leading to performance degradation. Despite this, its computational efficiency and effectiveness against other realistic attacks, such as \textit{sign-flipping} and \textit{label-flipping}, as well as under weaker low-variance attacks (see Figures \ref{fig:ctma-low-mnist} and \ref{fig:ctma-low-cifar}), make CTMA a valuable tool in the field of meta-aggregation.

Additionally, integrating NNM with CTMA can even further improve performance. For instance, Figure \ref{fig:ctma-vs-meta} shows that in the case of the \textit{sign-flipping} attack with Robust Aggregation for Federated Learning (RFA) \citep{pillutla2022robust} (an approximation of the GM aggregator), the combination of NNM and CTMA significantly improved both robustness and accuracy for both momentum and $\mu^2$-SGD.

\begin{figure*}[ht]
\centering
\subfigure[Performance comparison of CTMA with existing meta-aggregators under \textit{sign-flipping} and \textit{label-flipping} attacks.]{
    \includegraphics[width=0.45\textwidth]{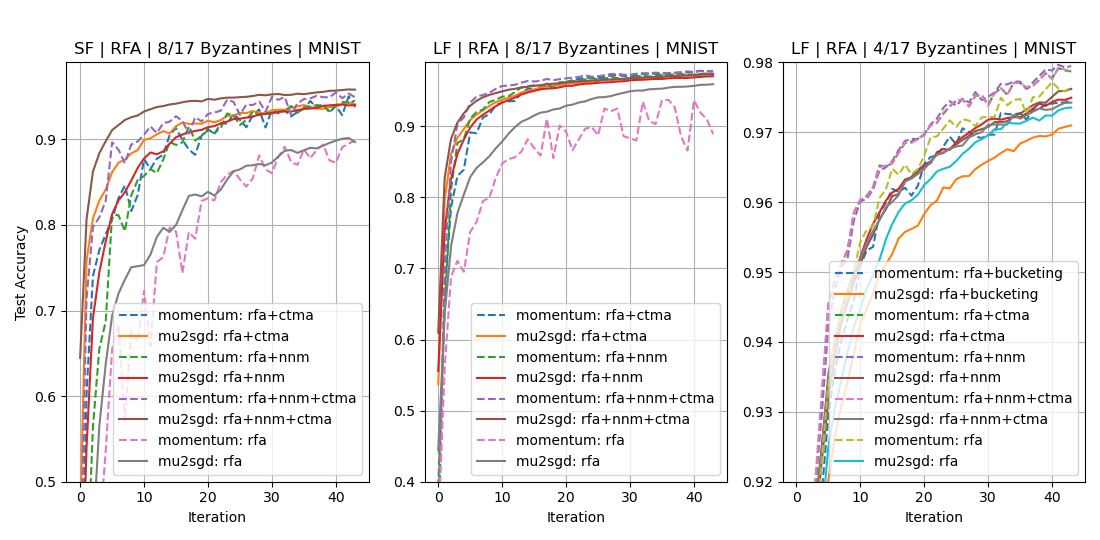}
    \label{fig:ctma-vs-meta}
}
\hfill
\subfigure[Performance comparison of CTMA with existing meta-aggregators under \textit{empire} and \textit{little} attacks.]{
    \includegraphics[width=0.45\textwidth]{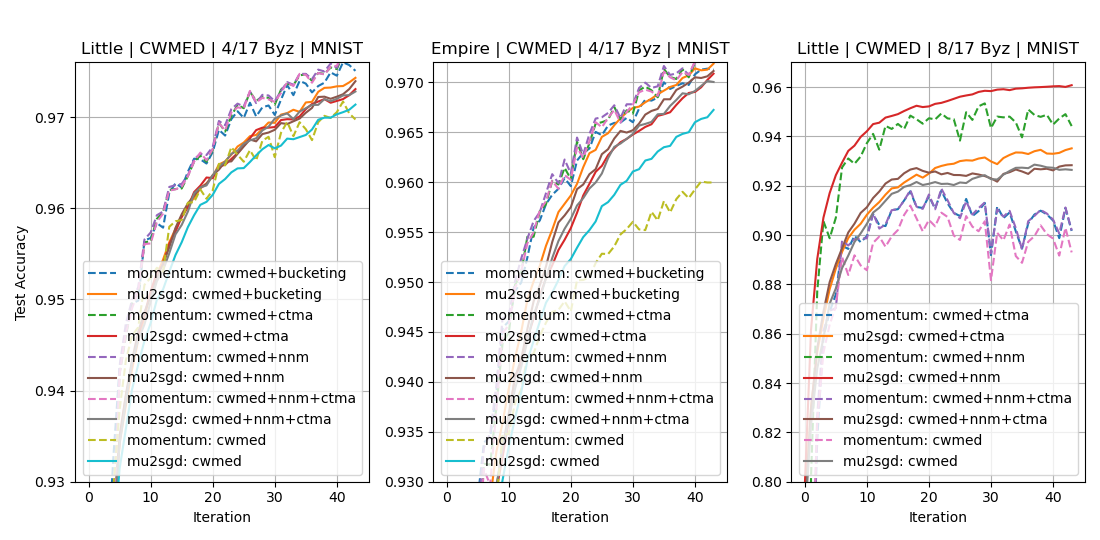}
    \label{fig:sota-ctma}
}
\caption{Performance comparison of CTMA with existing meta-aggregators (Conf. 1 in Table \ref{tab:configurations}).}
\label{fig:side-by-side}
\end{figure*}

\begin{figure*}[ht]
\centering
\subfigure[Performance comparison of $\mu^2$-SGD with standard momentum under \textit{empire} and \textit{little} attacks (Conf. 1 in Table \ref{tab:configurations}).]{
    \includegraphics[width=0.45\textwidth]{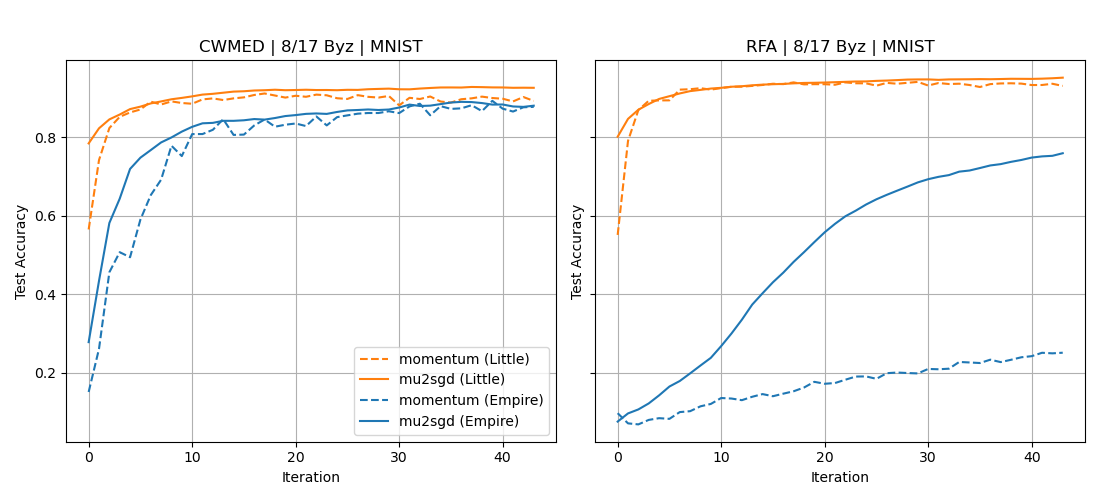}
    \label{fig:sota-mu2sgd}
}
\hfill
\subfigure[Performance comparison of $\mu^2$-SGD with standard momentum for a wide range of learning rates, under \textit{sign-flipping} and \textit{label-flipping} attacks (Conf. 4 in Table \ref{tab:configurations}).]{
    \includegraphics[width=0.45\textwidth]{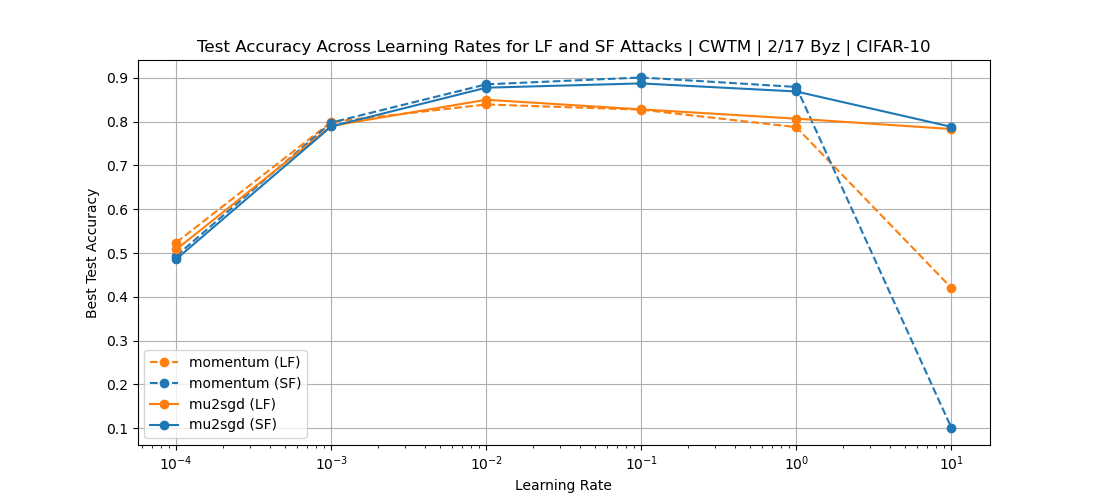}
    \label{fig:lr}
}
\caption{Performance comparison of $\mu^2$-SGD with standard momentum.}
\label{fig:side-by-side2}
\end{figure*}

\paragraph{$\mu^2$-SGD Versus Momentum.}

We compared the performance of the \(\mu^2\)-SGD algorithm against the standard momentum across a wide range of learning rates and various attacks. The results in Figures \ref{fig:sota-mu2sgd} and \ref{fig:ctma-vs-meta} show that \(\mu^2\)-SGD outperformed momentum under severe attacks when nearly half of the workers were Byzantine. Figure \ref{fig:sota-mu2sgd} highlights \(\mu^2\)-SGD's strong resilience against heavy SOTA attacks like \textit{little} and \textit{empire}. In these cases, \(\mu^2\)-SGD exhibited superior convergence speed and stability during training iterations by considering the entire history of gradients and query points, which aligns well with our theory. However, in cases of weaker attacks, as shown in Figure \ref{fig:ctma-vs-meta} for the \textit{label-flipping} attack with 4 out of 17 Byzantine workers, or when a meta-aggregator is used to enhance performance, as depicted in Figure \ref{fig:sota-ctma}, the high stability of \(\mu^2\)-SGD is not necessary. In fact, in these scenarios, the noisier momentum achieved faster convergence. This is because \(\mu^2\)-SGD, with its double momentum mechanism and parameters that account for the entire history, takes very small and cautious steps, potentially slowing its convergence compared to a single momentum mechanism with a larger parameter.

Furthermore, Figure \ref{fig:lr} shows that \(\mu^2\)-SGD exhibited greater stability across a wide range of learning rates for \textit{label-flipping} and \textit{sign-flipping} attacks, whereas momentum tended to be more sensitive to this parameter. This provides a broader range of options for hyper-parameter optimization of \(\mu^2\)-SGD.

\section*{Conclusions and Future Work}
We introduce two complementary techniques to enhance the efficiency and practicality of Byzantine-robust training. First, we present a novel meta-aggregator that substantially improves efficiency compared to previous methods. Second, we incorporate a recent novel gradient estimation technique into Byzantine-robust training. We substantiate the theoretical benefits of our new approaches and corroborate them in practice, demonstrating an advantage over momentum, the standard workhorse of Byzantine-robust training.

Future directions: \textbf{(i)} extending our investigation to the challenging setting of decentralized Byzantine-robust training \cite{he2022Byzantine}; 
\textbf{(ii)} understand whether we can further develop estimates that incorporate (stochastic) second order information in the spirit of~\citet{antonakopoulos2022extra}; in order to yield an even more aggressive variance reduction, which will give rise to easier mitigation of faulty updates.

\section*{Acknowledgement}
This research was partially supported by Israel PBC-VATAT, by the Technion Artificial Intelligent Hub (Tech.AI) and by the Israel Science Foundation (grant No. 447/20).

\bibliographystyle{plainnat}
\bibliography{bib}

\newpage
\appendix

\section{Bounded Smoothness Variance Assumption}
\label{sec:sigmal}
We show that Eq.~\eqref{eq:Main} implies that Eq.~\eqref{eq:sigmal} holds for some $\sigmal^2 \in[0,L^2]$.
\begin{align*}
\E\|(\nabla f_i(\bx;\bz)-\nabla f_i(\bx)) - (\nabla f_i(\by;\bz)-\nabla f_i(\by))\|^2 &= 
\E\|\nabla f_i(\bx;\bz)-\nabla f_i(\by;\bz)\|^2 - \|\nabla f_i(\bx)-\nabla f_i(\by))\|^2 \\
&\leq L^2 \|\bx-\by\|^2  ~.
\end{align*} 
Here, we also used  $\E(\nabla f_i(\bx;\bz)-\nabla f_i(\by;\bz))=(\nabla f_i(\bx)-\nabla f_i(\by))$, and followed Eq.~\eqref{eq:Main}. Therefore, we establish that  $\sigmal^2\in[0,L^2]$.

\section{Synchronous $\mu^2$-SGD Analysis}
\subsection{Proof of Thm.~\ref{thm:Main}}
\label{sec:proofThmEpsilon}
\begin{proof}[Proof of Thm.~\ref{thm:Main}] 

We follow similar steps as in the proof of Theorem 4.1 of \citet{levy2023mu}.  First, we consider the definition of $\bx_t$, which leads us to the following relationship:
        \begin{gather*}
            \alpha_{1:t-1}(\bx_t-\bx_{t-1})=\alpha_t(\bw_t-\bx_t)~,
        \end{gather*}
which further yields to,
\begin{equation*}
    \|{\bx_t-\bx_{t-1}}\|=(\alpha_t/\alpha_{1:t-1})\|{\bw_t-\bx_t}\|~.
\end{equation*}
Given that $\bw_t$ belongs to $\K$ by its definition, and $\bx_t$ is a weighted average of $\{\bw_t\}_t$, the convexity of $\K$ ensures that $\bx_t \in \K$. Furthermore, $\|\bw_t - \bx_t\| \leq D$ in accordance with the assumption in Eq. \eqref{eq:bounded_diameter}. Assigning $\alpha_t = t$, leads us to establish the ratio $\frac{\alpha_t}{\alpha_{1:t-1}} = \frac{2}{t-1}$. Consequently, for any $t\geq 1$,
\begin{equation}
\label{eq:anytime_diameter}
    \|{\bx_t-\bx_{t-1}}\| \leq \frac{2}{t-1} D~.
\end{equation}
We proceed to analyze the recursive dynamics of $\varepsilon_t^{(i)}$ for each $i\in {\GGG}$. Based on the definitions of $\bd_t^{(i)}$ and $\varepsilon_t^{(i)}$, we can present the recursive relationship in the following way:
\begin{align*}
\varepsilon_t^{(i)}=\beta_t(\bg_{t}^{(i)}-\nabla f_i(\bx_t))+(1-\beta_t)Z^{(i)}_t+(1-\beta_t)\varepsilon^{(i)}_{t-1}~,
\end{align*}
where $Z^{(i)}_t := (\bg_{t}^{(i)}-\nabla f_i(\bx_t))-(\Tilde{\bg}_{t-1}^{(i)}-\nabla f_i(\bx_{t-1}))$. Upon choosing $\beta_t = \frac{1}{t}$, we can reformulate the equation as follows: 
\begin{equation*}
    t\varepsilon_t^{(i)}=(\bg_{t}^{(i)}-\nabla f_i(\bx_t))+(t-1)Z^{(i)}_t+(t-1)\varepsilon^{(i)}_{t-1}=\MMM_t^{(i)}+(t-1)\varepsilon^{(i)}_{t-1}~,
\end{equation*}
where $\MMM_t^{(i)}:=(\bg_{t}^{(i)}-\nabla f_i(\bx_t))+(t-1)Z^{(i)}_t$. Unrolling this recursion yields an explicit expression for any $t \in [T]$:
\begin{equation*}
t\varepsilon_t^{(i)}=\sum_{\tau\in[t]}\MMM_\tau^{(i)}~.
\end{equation*}
Following this, we derive an upper bound for the expected square norm of $\MMM_\tau^{(i)}$ as follows:
\begin{align*}
    \E\|{\MMM_\tau^{(i)}}\|^2 &= \E\|{(\bg_{\tau}^{(i)}-\nabla f_i(\bx_\tau))+(\tau-1)Z^{(i)}_\tau}\|^2 \\
    &\leq 2\E\|{\bg_{\tau}^{(i)}-\nabla f_i(\bx_\tau)}\|^2 + 2\E\|{(\tau-1)Z^{(i)}_\tau}\|^2 \\
    &= 2\E\|{\bg_{\tau}^{(i)}-\nabla f_i(\bx_\tau)}\|^2 + 2(\tau-1)^2\E\|{(\bg_{\tau}^{(i)}-\nabla f_i(\bx_\tau))-(\Tilde{\bg}_{\tau-1}^{(i)}-\nabla f_i(\bx_{\tau-1}))}\|^2 \\
    &\leq 2\sigma^2 + 2\sigma_L^2(\tau-1)^2 \E\|\bx_\tau-\bx_{\tau-1}\|^2 \\
    &\leq 2\sigma^2 + 8D^2\sigma_L^2 = \tsigma^2~,
\end{align*}
where the first inequality uses $\|\ba+\bb\|^2\leq 2\|\ba\|^2+2\|\bb\|^2$, which holds $\forall \ba,\bb\in\real^d$. The second inequality employs the assumptions outlined in Equations \eqref{eq:bounded-variance} and \eqref{eq:sigmal}. The third inequality follows Eq. \eqref{eq:anytime_diameter}. 

Note that for each \( i \in \GGG \),  the sequence $\{\MMM^{(i)}_t\}_t$ constitutes a martingale difference sequence relative to a natural filtration $\{\F^{(i)}_t\}_t$. Furthermore, the sequence $\{\MMM^{(i)}_{t}\}_{t,i}$ forms \( |\GGG| \) independent martingale difference $\{\MMM^{(i)}_t\}_t$ sequences relative to a natural filtration $\F_t$. 

\begin{lemma}\label{lem:doubleSumMart}
Consider $\{M^{(i)}_{t}\}_{t,i}$, a collection of $m$ martingale difference sequences $\{M^{(i)}_t\}_t$ with a respect to a natural filtration $\F_t$. These sequences are independent for each $i\in[m]$ with a respect to natural filtration $\{\F^{(i)}_t\}_t$. Then, for any $t\geq 1$, the following holds:
\begin{align*}
\E \left\|\sum_{i\in[m]}\sum_{\tau\in[t]}M_{\tau}^{(i)}\right\|^2 =
\sum_{i\in[m]}\sum_{\tau\in[t]} \E\left\|M^{(i)}_\tau\right\|^2~.
\end{align*}
\end{lemma}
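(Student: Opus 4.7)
The plan is to reduce the double sum to a single sum of orthogonal martingale increments, and then dispatch the remaining within-step cross terms via independence across workers. Define $N_\tau := \sum_{i\in[m]} M^{(i)}_\tau$. Since each $\{M^{(i)}_t\}_t$ is a martingale difference sequence with respect to the joint filtration $\F_t$, so is $\{N_\tau\}_\tau$: indeed, $N_\tau$ is $\F_\tau$-measurable and
\[
\E[N_\tau \mid \F_{\tau-1}] = \sum_{i\in[m]} \E[M^{(i)}_\tau \mid \F_{\tau-1}] = 0.
\]
The classical orthogonality of martingale increments then yields
\[
\E\left\|\sum_{\tau\in[t]} N_\tau\right\|^2 = \sum_{\tau\in[t]} \E\|N_\tau\|^2,
\]
so the problem is reduced to showing, for each fixed $\tau$, that $\E\|N_\tau\|^2 = \sum_{i\in[m]}\E\|M^{(i)}_\tau\|^2$.

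To handle the within-step piece, I would expand
\[
\E\|N_\tau\|^2 = \sum_{i\in[m]} \E\|M^{(i)}_\tau\|^2 + \sum_{i\neq j} \E\langle M^{(i)}_\tau, M^{(j)}_\tau\rangle.
\]
For $i\neq j$, the stated independence of the sequences $\{M^{(i)}_s\}_s$ and $\{M^{(j)}_s\}_s$ implies independence of the random vectors $M^{(i)}_\tau$ and $M^{(j)}_\tau$, and each has mean zero by the tower property, $\E M^{(i)}_\tau = \E[\E[M^{(i)}_\tau\mid \F_{\tau-1}]] = 0$. Hence $\E\langle M^{(i)}_\tau, M^{(j)}_\tau\rangle = \langle \E M^{(i)}_\tau, \E M^{(j)}_\tau\rangle = 0$, and combining with the previous display gives the claim.

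The main subtlety is not computational but consists of invoking the two structural hypotheses at the right places: the joint martingale property with respect to $\F_t$ kills cross-time terms, while cross-worker independence kills the same-time cross terms. A naive attempt to handle cross-worker terms via the martingale property alone fails because $M^{(i)}_\tau$ and $M^{(j)}_\tau$ are both $\F_\tau$-measurable but neither is $\F_{\tau-1}$-measurable, so no direct tower argument applies; one genuinely needs the independence assumption for those particular pairs. Once this dichotomy is made explicit, the proof collapses into the two short displays above.
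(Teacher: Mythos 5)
Your proposal is correct and follows essentially the same route as the paper: both reduce the double sum to the single martingale difference sequence $N_\tau=\sum_{i\in[m]}M^{(i)}_\tau$, apply the standard orthogonality of martingale increments across time, and then eliminate the same-time cross terms $\E\langle M^{(i)}_\tau,M^{(j)}_\tau\rangle$ using cross-worker independence together with the zero-mean property. The only cosmetic difference is that the paper kills these cross terms by conditioning on $\F_{\tau-1}$ and invoking the tower rule, whereas you factor the unconditional expectation directly; both are valid under the stated independence.
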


Leveraging Lemma \ref{lem:doubleSumMart}, we get the following relationship:
\begin{gather*}            
\E\left\|{t\sum_{i\in\GGG}\varepsilon_t^{(i)}} \right\|^2  = \E\left\|{\sum_{i\in\GGG}\sum_{\tau\in[t]}\MMM_\tau^{(i)}}\right\|^2 
= \sum_{i\in\GGG}\sum_{\tau\in[t]}\E\left\|{\MMM_\tau^{(i)}}\right\|^2
{\leq} \tsigma^2t|\GGG|~.
\end{gather*} 
In a similar way, for the individual error, we have:
\begin{gather*}
    \E\left\|t{\varepsilon_t^{(i)}} \right\|^2 = \E\left\|{\sum_{\tau\in[t]}\MMM_\tau^{(i)}}\right\|^2 =  \sum_{\tau\in[t]}\E\left\|{\MMM_\tau^{(i)}}\right\|^2 \leq \tsigma^2t~.
\end{gather*}
\end{proof}

\subsubsection{Proof of Lemma ~\ref{lem:doubleSumMart}}
\begin{proof}[Proof of Lemma ~\ref{lem:doubleSumMart}]
\begin{lemma}[Borrowed from Lemma B.1 in \citet{levy2023mu}]
\label{lem:SumMart}
Let  $\{ M_t\}_t$ be a martingale difference sequence with respect to a filtration $\{\F_t\}_t$, then the following holds for any $t$,
\begin{align*}
\E \left\|\sum_{\tau\in[t]} M_\tau \right\|^2 
&=
\sum_{\tau\in[t]} \E\left\| M_\tau\right\|^2 ~.
\end{align*}
\end{lemma}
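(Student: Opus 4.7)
The plan is to expand the squared norm of $\sum_{\tau\in[t]} M_\tau$ into a sum of diagonal squared norms plus cross inner-product terms, and then kill the cross terms in expectation using the martingale difference property together with the tower rule. This is the classical Pythagorean identity for martingale differences in a Hilbert-space setting, so the argument should be short.

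Concretely, first I would write pointwise
\begin{equation*}
\left\|\sum_{\tau\in[t]} M_\tau\right\|^2 \;=\; \sum_{\tau\in[t]} \|M_\tau\|^2 \;+\; 2\sum_{1\le \tau<\sigma\le t} \langle M_\tau, M_\sigma\rangle,
\end{equation*}
so that upon taking expectations, the diagonal already contributes exactly the claimed $\sum_{\tau\in[t]} \E\|M_\tau\|^2$. It then suffices to show that $\E\langle M_\tau, M_\sigma\rangle = 0$ whenever $\tau<\sigma$. For such a pair, $M_\tau$ is $\F_{\sigma-1}$-measurable (since $\{M_t\}_t$ is adapted to $\{\F_t\}_t$), so by linearity of inner products and the tower property,
\begin{equation*}
\E\langle M_\tau, M_\sigma\rangle \;=\; \E\bigl[\,\E[\langle M_\tau, M_\sigma\rangle\mid \F_{\sigma-1}]\,\bigr] \;=\; \E\bigl[\langle M_\tau, \E[M_\sigma\mid \F_{\sigma-1}]\rangle\bigr] \;=\; 0,
\end{equation*}
where the final equality uses the defining martingale-difference property $\E[M_\sigma\mid\F_{\sigma-1}]=0$. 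Summing the vanishing cross terms gives the desired identity.

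There is essentially no conceptual obstacle here; this is a textbook second-moment orthogonality calculation. The only points requiring mild care are: (i) verifying that $M_\tau$ is $\F_{\sigma-1}$-measurable, which follows from adaptedness of the sequence to the filtration; (ii) being precise that the inner product in $\real^d$ commutes with conditional expectation in the required way; and (iii) implicitly assuming the $M_\tau$ are square-integrable so Cauchy--Schwarz ensures the cross terms are absolutely integrable and linearity of expectation applies. Once this single-sum Pythagorean identity is in hand, the enclosing Lemma \ref{lem:doubleSumMart} follows almost immediately by viewing the doubly-indexed collection as a single martingale difference sequence under a suitably enlarged filtration (obtained by ordering the indices), using the cross-worker independence to verify that expectations conditional on the enlarged past still vanish.
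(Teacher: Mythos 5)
Your proof is correct: the expansion into diagonal terms plus cross terms, followed by killing each cross term via the tower property and $\E[M_\sigma\mid\F_{\sigma-1}]=0$ (pulling out the $\F_{\sigma-1}$-measurable factor $M_\tau$), is exactly the standard argument. The paper does not reprove this lemma---it imports it from Lemma B.1 of \citet{levy2023mu}---but your computation is the canonical one and mirrors precisely the conditioning argument the paper itself carries out for the cross-worker terms in the proof of Lemma~\ref{lem:doubleSumMart}.
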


For any $\tau\in[t]$, we have that,
\begin{align}
\E \left\|\sum_{i\in[m]}M_{\tau}^{(i)}\right\|^2 
&= \sum_{i\in[m]}\E\left\|M_{\tau}^{(i)}\right\|^2 + \sum_{i \neq j; \ i,j\in[m]} \E\langle M_{\tau}^{(i)}, M_{\tau}^{(j)} \rangle \nonumber
\\&=
\sum_{i\in[m]}\E\left\|M_{\tau}^{(i)}\right\|^2 + \sum_{i \neq j; \ i,j\in[m]} \E\left[\underbrace{\E\left[\langle M_{\tau}^{(i)}, M_{\tau}^{(j)} \rangle | \F_{\tau-1}\right]}_{=0}\right] \nonumber 
\\
&=
\sum_{i\in[m]}\E\left\|M_{\tau}^{(i)}\right\|^2 ~,\label{eq:workers-sum}
\end{align}
where the second equality is obtained by applying the law of total expectation. For the third equality, we rely on the independence of \(M_{\tau}^{(i)}\) and \(M_{\tau}^{(j)}\) for each \(i, j \in [m]\), \(i \neq j\) and the fact that \(\{M^{(i)}_{\tau}\}_\tau\) constitutes a martingale difference sequence, which implies \(\E\left[ M_{\tau}^{(i)} \middle| \F_{\tau-1}\right] = 0\). 

Therefore,
\begin{align*}
\E \left\|\sum_{i\in[m]}\sum_{\tau\in[t]}M_{\tau}^{(i)}\right\|^2 
= \E \left\|\sum_{\tau\in[t]}\sum_{i\in[m]}M_{\tau}^{(i)}\right\|^2 = \sum_{\tau\in[t]}\E \left\|\sum_{i\in[m]}M_{\tau}^{(i)}\right\|^2 = \sum_{\tau\in[t]}\sum_{i\in[m]}\E \left\|M_{\tau}^{(i)}\right\|^2~,
\end{align*}
where the second equality is a result of applying Lemma \ref{lem:SumMart}, which is appropriate since the sequence $\left\{\sum_{i\in[m]}M_{t}^{(i)}\right\}_t$ forms a martingale difference sequence. The third equality follows the result in Eq. \eqref{eq:workers-sum}.
\end{proof}

\subsection{Proof of Thm.~\ref{thm:muSGD}}
\begin{proof}[Proof of Thm.~\ref{thm:muSGD}] 
\label{sec:muSGDProof}
We begin by revisiting the AnyTime guarantee as outlined in \citet{cutkosky2019anytime} and adapt the standard regret analysis of the update rule, as detailed in \citet{hazan2016introduction}, to our context.
\begin{theorem}[Rephrased from Theorem 1 in \citet{cutkosky2019anytime}]
\label{theorem1}
    Let $f:\K\rightarrow\real$ be a convex function with a minimum $\bx^*\in\arg\min_{\bw\in\K}f(\bw)$. Also let $\{\alpha_t\geq 0\}_t$, and $\{\bw_t\in\K\}_t$, $\{\bx_t\in\K\}_t$, such that $\{\bx_t\}_t$ is an $\{\alpha_t\}_t$ weighted averaged of $\{\bw_t\}_t$, i.e. such that $\bx_1=\bw_1$, and for any $t\geq 1$,
    \begin{equation*}
        \bx_{t+1}=\frac{1}{\alpha_{1:t+1}}\sum_{\tau\in[t+1]}{\alpha_\tau\bw_\tau}~.
    \end{equation*}
    Then the following holds for any $t\geq 1$:
    \begin{equation*}
        \alpha_{1:t}(f(\bx_t)-f(\bx^*))\leq\sum_{\tau\in[t]}\alpha_\tau\nabla f(\bx_\tau)(\bw_\tau-\bx^*)~.
    \end{equation*}
\end{theorem}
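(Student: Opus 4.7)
The plan is to prove this by induction on $t$, using two first-order convexity inequalities evaluated at the averaged iterate $\bx_{t+1}$, together with a single algebraic identity derived from the weighted-average definition of $\{\bx_t\}_t$.

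For the base case $t = 1$: since $\bx_1 = \bw_1$ and $\alpha_{1:1} = \alpha_1$, the claim collapses to $f(\bx_1) - f(\bx^*) \leq \nabla f(\bx_1)(\bx_1 - \bx^*)$, which is the standard convexity inequality.

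For the inductive step, I would first extract the ``anytime'' algebraic identity by rearranging the update $\alpha_{1:t+1}\bx_{t+1} = \alpha_{1:t}\bx_t + \alpha_{t+1}\bw_{t+1}$ to obtain
\[
\alpha_{1:t}(\bx_{t+1} - \bx_t) = \alpha_{t+1}(\bw_{t+1} - \bx_{t+1}).
\]
Then I would invoke convexity of $f$ at the point $\bx_{t+1}$ in two ways: $f(\bx_{t+1}) - f(\bx_t) \leq \nabla f(\bx_{t+1})(\bx_{t+1} - \bx_t)$ and $f(\bx_{t+1}) - f(\bx^*) \leq \nabla f(\bx_{t+1})(\bx_{t+1} - \bx^*)$. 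Multiplying the first inequality by $\alpha_{1:t}$, the second by $\alpha_{t+1}$, and adding, the left-hand side telescopes into $\alpha_{1:t+1}(f(\bx_{t+1}) - f(\bx^*)) - \alpha_{1:t}(f(\bx_t) - f(\bx^*))$, while the right-hand side collapses by the identity into $\alpha_{t+1}\nabla f(\bx_{t+1})(\bw_{t+1} - \bx^*)$. Combining this with the induction hypothesis applied to $t$ yields the desired bound at $t+1$.

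The only nontrivial ingredient is the identity $\alpha_{1:t}(\bx_{t+1} - \bx_t) = \alpha_{t+1}(\bw_{t+1} - \bx_{t+1})$, which encodes the geometric content of the anytime construction: the step from one weighted average to the next is exactly an $\alpha$-weighted nudge toward the freshly added iterate $\bw_{t+1}$. Once this identity is in place, the rest of the argument is a direct application of convexity and weighted telescoping, with no analytic obstacles and no need to appeal to the projected update rule for $\bw_t$ (the theorem is agnostic to how $\bw_t$ is produced, requiring only $\bw_t \in \K$).
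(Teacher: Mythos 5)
Your induction is correct and complete: the base case is the plain convexity inequality at $\bx_1=\bw_1$, the identity $\alpha_{1:t}(\bx_{t+1}-\bx_t)=\alpha_{t+1}(\bw_{t+1}-\bx_{t+1})$ follows directly from $\alpha_{1:t+1}\bx_{t+1}=\alpha_{1:t}\bx_t+\alpha_{t+1}\bw_{t+1}$, and the weighted combination of the two gradient inequalities at $\bx_{t+1}$ telescopes exactly as you claim. Note that the paper itself offers no proof of this statement --- it is imported verbatim as Theorem 1 of \citet{cutkosky2019anytime} --- so there is nothing internal to compare against; your argument is the standard one from that reference, and your closing observation that the result is agnostic to how the $\bw_t$ are generated (requiring only $\bw_t\in\K$ and convexity of $\K$ so that $\bx_t\in\K$) is precisely why the paper can apply it to the robustly aggregated update.
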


\begin{lemma} \label{lem:RegretBound}
Let $f:\K\rightarrow\real$ be a convex function with a minimum $\bx^*\in\arg\min_{\bw\in\K}f(\bw)$, and assume that the assumption in Eq. \eqref{eq:bounded_diameter} holds. Also let $\{\alpha_t\geq 0\}_t$, and $\{\bw_t\in\K\}_t$. Then, for any $t\geq 1$, an arbitrary vector $\hat{\bd}_t\in\real^d$, and the update rule:
\begin{gather*}
    \bw_{t+1}=\Pi_{\K}\left({\bw_{t}- \eta\alpha_{t}\hat{\bd}_t}\right)~,
\end{gather*}
we have, 
\begin{align*}
    \sum_{\tau=1}^t \alpha_\tau \langle \hat{\bd}_\tau, \bw_{\tau+1}-\bx^*\rangle \leq 
    \frac{D^2}{2\eta} -\frac{1}{2\eta}\sum_{\tau=1}^t \|\bw_{\tau}-\bw_{\tau+1}\|^2~.
\end{align*}
\end{lemma}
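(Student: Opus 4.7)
The plan is to run the standard projected-gradient regret analysis, adapted to weighted steps $\eta\alpha_\tau$ and to the fact that the inner product involves $\bw_{\tau+1}$ rather than $\bw_\tau$. The starting point is the first-order optimality characterization of the Euclidean projection: since $\bw_{\tau+1}=\Pi_{\K}(\bw_\tau-\eta\alpha_\tau \hat{\bd}_\tau)$ and $\bx^*\in\K$, for each $\tau$ we have
\begin{equation*}
\langle \bw_{\tau+1}-(\bw_\tau-\eta\alpha_\tau\hat{\bd}_\tau),\,\bx^*-\bw_{\tau+1}\rangle \geq 0,
\end{equation*}
which rearranges to $\eta\alpha_\tau\langle \hat{\bd}_\tau,\bw_{\tau+1}-\bx^*\rangle \leq \langle \bw_\tau-\bw_{\tau+1},\,\bw_{\tau+1}-\bx^*\rangle$.

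Next I would convert the right-hand side into a telescoping expression via the three-point identity. Writing $\|\bw_\tau-\bx^*\|^2=\|(\bw_\tau-\bw_{\tau+1})+(\bw_{\tau+1}-\bx^*)\|^2$ and expanding gives
\begin{equation*}
\langle \bw_\tau-\bw_{\tau+1},\,\bw_{\tau+1}-\bx^*\rangle = \tfrac{1}{2}\bigl(\|\bw_\tau-\bx^*\|^2 - \|\bw_{\tau+1}-\bx^*\|^2 - \|\bw_\tau-\bw_{\tau+1}\|^2\bigr).
\end{equation*}
Combining with the previous display and dividing by $\eta$ yields a per-step bound for $\alpha_\tau\langle \hat{\bd}_\tau,\bw_{\tau+1}-\bx^*\rangle$.

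Finally, I would sum from $\tau=1$ to $t$. The first two terms telescope to $\frac{1}{2\eta}(\|\bw_1-\bx^*\|^2-\|\bw_{t+1}-\bx^*\|^2)$, which is upper bounded by $\frac{1}{2\eta}\|\bw_1-\bx^*\|^2 \leq \frac{D^2}{2\eta}$ using Eq.~\eqref{eq:bounded_diameter} (both $\bw_1,\bx^*\in\K$). The remaining terms give exactly $-\frac{1}{2\eta}\sum_{\tau=1}^t\|\bw_\tau-\bw_{\tau+1}\|^2$, matching the claim.

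There is no serious obstacle here; the only subtlety is remembering to keep $\bw_{\tau+1}$ (and not $\bw_\tau$) on both sides of the projection inequality so that the $-\|\bw_\tau-\bw_{\tau+1}\|^2$ term survives with a positive coefficient in the telescoping identity — this is precisely what produces the negative movement term on the right-hand side, which will later be used to absorb smoothness-induced quadratic terms in the main proof of Thm.~\ref{thm:muSGD}.
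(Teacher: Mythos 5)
Your proposal is correct and follows essentially the same route as the paper's proof: the projection's first-order optimality condition (which the paper derives by rewriting the update as a regularized linear minimization over $\K$, yielding the identical inequality), the three-point identity, telescoping, and the diameter bound on $\|\bw_1-\bx^*\|^2$. No gaps.
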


\begin{lemma}
\label{lemma2}
let $f:\K \rightarrow \real$ be an $L$-smooth and convex function, and let $\bx^*\in\arg\min_{\bx\in\K}f(\bx) $, then for any $\bx\in\real^d$ we have,
\begin{equation*}
    \|{\nabla f(\bx)}-\nabla f(\bx^*)\|^2 \leq 2L(f(\bx)-f(\bx^*))~.
\end{equation*}
\end{lemma}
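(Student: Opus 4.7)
The statement is the classical self-bounding (or ``quadratic growth of gradient'') inequality for smooth convex functions, evaluated at a minimizer. The plan is to reduce it to a one-step descent estimate on an auxiliary function whose minimizer coincides with $\bx^*$ and whose gradient at $\bx$ equals $\nabla f(\bx)-\nabla f(\bx^*)$, so that the $L$-smoothness descent lemma can be applied cleanly without worrying about the gradient at the optimum.

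Concretely, I would define $g:\reals^d\to\reals$ by $g(\by):=f(\by)-\langle\nabla f(\bx^*),\by\rangle$. As an affine perturbation of $f$, the function $g$ is convex and $L$-smooth, and satisfies $\nabla g(\by)=\nabla f(\by)-\nabla f(\bx^*)$ for every $\by\in\reals^d$. In particular $\nabla g(\bx^*)=0$, so $\bx^*$ is an \emph{unconstrained} global minimizer of $g$ on $\reals^d$, and $g(\bx^*)\leq g(\by)$ for every $\by$.

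Next I would invoke the standard descent lemma for $g$ at the test point $\by^{+}:=\bx-\tfrac{1}{L}\nabla g(\bx)$, which by $L$-smoothness yields
\begin{equation*}
g(\by^{+})\leq g(\bx)+\langle \nabla g(\bx),\by^{+}-\bx\rangle+\frac{L}{2}\|\by^{+}-\bx\|^2 = g(\bx)-\frac{1}{2L}\|\nabla g(\bx)\|^2.
\end{equation*}
Combining with $g(\bx^*)\leq g(\by^{+})$ and rearranging gives $\|\nabla g(\bx)\|^2\leq 2L(g(\bx)-g(\bx^*))$. Finally, unwinding the definition of $g$ produces
\begin{equation*}
\|\nabla f(\bx)-\nabla f(\bx^*)\|^2\leq 2L\bigl(f(\bx)-f(\bx^*)-\langle\nabla f(\bx^*),\bx-\bx^*\rangle\bigr).
\end{equation*}

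The only subtlety, and the place that will be the main obstacle to presenting the result exactly as stated, is discharging the inner-product term $\langle\nabla f(\bx^*),\bx-\bx^*\rangle$. If $\bx^*$ is an unconstrained minimizer (or lies in the interior of $\K$), then $\nabla f(\bx^*)=0$ and the term vanishes outright. For the paper's constrained minimizer $\bx^*\in\arg\min_{\by\in\K}f(\by)$, the first-order optimality condition $\langle\nabla f(\bx^*),\bx-\bx^*\rangle\geq 0$ holds for every $\bx\in\K$, so the extra term can only tighten the bound and may be dropped. Either way, the stated inequality $\|\nabla f(\bx)-\nabla f(\bx^*)\|^2\leq 2L(f(\bx)-f(\bx^*))$ follows, with the quantifier ``$\bx\in\reals^d$'' interpreted in whichever regime guarantees non-negativity of that inner product (which is the regime in which the lemma is actually invoked inside the proof of Theorem~\ref{thm:muSGD}).
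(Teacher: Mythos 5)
Your proof is correct and follows essentially the same route as the paper: both subtract the linearization $\langle\nabla f(\bx^*),\cdot\rangle$ to form an auxiliary function (your $g$ and the paper's $h(\bx)=f(\bx)-f(\bx^*)-\langle\nabla f(\bx^*),\bx-\bx^*\rangle$ differ only by an additive constant) whose unconstrained minimizer is $\bx^*$, apply the standard self-bounding inequality $\|\nabla h(\bx)\|^2\leq 2L(h(\bx)-h(\bx^*))$ (which you re-derive via the descent step, while the paper cites Lemma~C.1 of \citet{levy2023mu}), and then drop the inner-product term via first-order optimality. Your remark about the quantifier $\bx\in\real^d$ versus $\bx\in\K$ is a fair observation that applies equally to the paper's own argument.
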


Next, for every iteration $t\leq T$, we define:
\begin{gather*}
    \hat{\bd}_t:=\A\left(\bd^{(1)}_t, \ldots, \bd^{(m)}_t\right)~,
    \\
    \varepsilon_t:=\hat{\bd}_t-\nabla f(\bx_t)~.
\end{gather*}
Thus, combininh Theorem~\ref{theorem1} with Lemma~\ref{lem:RegretBound}, we have that,
\begin{align}
\label{eq:regret-main}
    \alpha_{1:t}(f(\bx_t)-f(\bx^*)) &\leq \sum_{\tau\in[t]} \alpha_\tau\langle \nabla f(\bx_\tau), \bw_\tau-\bx^*\rangle \nonumber \\ 
    &= \sum_{\tau\in[t]}\alpha_\tau\langle \hat{\bd}_\tau,\bw_{\tau+1}-\bx^*\rangle + \sum_{\tau\in[t]}\alpha_\tau\langle \hat{\bd}_\tau,\bw_\tau-\bw_{\tau+1}\rangle - \sum_{\tau\in[t]}\alpha_\tau \langle\varepsilon_\tau,\bw_\tau-\bx^*\rangle \nonumber \\
    &\leq \frac{D^2}{2\eta} - \frac{1}{2\eta} \sum_{\tau\in[t]} \|{\bw_\tau-\bw_{\tau+1}}\|^2  + \sum_{\tau\in[t]}\alpha_\tau\langle \hat{\bd}_\tau,\bw_\tau-\bw_{\tau+1}\rangle  - \sum_{\tau\in[t]}\alpha_\tau\langle\varepsilon_\tau,\bw_\tau-\bx^*\rangle \nonumber \\
    &= \frac{D^2}{2\eta} - \frac{1}{2\eta} \sum_{\tau\in[t]} \|{\bw_\tau-\bw_{\tau+1}}\|^2 +  \sum_{\tau\in[t]}\alpha_\tau\langle \nabla f(\bx_\tau),\bw_\tau-\bw_{\tau+1}\rangle  -\sum_{\tau\in[t]}\alpha_\tau \langle\varepsilon_\tau,\bw_{\tau+1}-\bx^*\rangle \nonumber \\  &\leq \frac{D^2}{2\eta} \underbrace{- \frac{1}{2\eta} \sum_{\tau\in[t]} \|{\bw_\tau-\bw_{\tau+1}}\|^2 +  \sum_{\tau\in[t]}\alpha_\tau\langle \nabla f(\bx_\tau),\bw_\tau-\bw_{\tau+1}\rangle  }_{\textnormal{(A)}}+D\sum_{\tau\in[t]}\alpha_\tau\|{\varepsilon_\tau}\| ~,
\end{align}
where the first inequality is derived from the Anytime guarantee, as outlined in Theorem \ref{theorem1}. The second inequality follows Lemma \ref{lem:RegretBound}.  The third inequality is a result of applying the Cauchy-Schwarz inequality and the assumption in Eq. \eqref{eq:bounded_diameter}.
\begin{align*}
    \textnormal{(A)} &:=  - \frac{1}{2\eta} \sum_{\tau\in[t]} \|{\bw_\tau-\bw_{\tau+1}}\|^2 +  \sum_{\tau\in[t]}\alpha_\tau\langle \nabla f(\bx_\tau),\bw_\tau-\bw_{\tau+1}\rangle
    \\ & =- \frac{1}{2\eta} \sum_{\tau\in[t]} \|{\bw_\tau-\bw_{\tau+1}}\|^2 +  \sum_{\tau\in[t]}\alpha_\tau\langle \nabla f(\bx_\tau)-\nabla f(\bx^*),\bw_\tau-\bw_{\tau+1}\rangle + \sum_{\tau\in[t]}\alpha_\tau  \langle \nabla f(\bx^*),\bw_\tau-\bw_{\tau+1}\rangle \\ 
    & \leq \frac{\eta}{2} \sum_{\tau\in[t]} \alpha_\tau^2\|{\nabla f(\bx_\tau)-\nabla f(\bx^*)}\|^2 + \sum_{\tau\in[t]}\alpha_\tau\langle \nabla f(\bx^*),\bw_\tau-\bw_{\tau+1}\rangle\\ 
    & \leq 2\eta L \sum_{\tau\in[t]} \alpha_{1:\tau}\Delta_\tau + \sum_{\tau\in[t]}(\alpha_\tau-\alpha_{\tau-1})\langle \nabla f(\bx^*),\bw_\tau\rangle - \alpha_t \langle \nabla f(\bx^*),\bw_{t+1}\rangle \\ 
    & = 2\eta L \sum_{\tau\in[t]} \alpha_{1:\tau}\Delta_\tau + \sum_{\tau\in[t]}(\alpha_\tau-\alpha_{\tau-1})\langle \nabla f(\bx^*),\bw_\tau - \bw_{t+1}\rangle \\
    & \leq 2\eta L \sum_{\tau\in[t]} \alpha_{1:\tau}\Delta_\tau + \sum_{\tau\in[t]}(\alpha_\tau-\alpha_{\tau-1})\| \nabla f(\bx^*)\| \|\bw_\tau - \bw_{t+1}\| \\
    & \leq  \frac{1}{2T} \sum_{\tau\in[T]} \alpha_{1:\tau}\Delta_\tau + \alpha_tG^*D ~.
\end{align*}
Here, the first inequality employs the Young’s inequality.
For the second inequality, we introduce the notation \(\Delta_t := f(\bx_t) - f(\bx^*)\), and we follow Lemma \ref{lemma2}, which relates to the smoothness of the function $f$. In this step, we also set $\alpha_0 = 0$ and utilizes the property \(\alpha_\tau^2 \leq 2\alpha_{1:\tau}\), given that \(\alpha_\tau = \tau\). The third inequality uses the Cauchy-Schwarz inequality. The last inequality follows the assumption in Eq. \eqref{eq:bounded_diameter}. It uses the fact that \(t \leq T\) and $\Delta_t\geq0$, $\forall t$. This step also incorporates the choice of an appropriate learning rate parameter \(\eta\leq1/4LT\).

Plugging \textnormal{(A)} into Eq. \eqref{eq:regret-main}, gives us,
\begin{align}
\label{eq:reg-final}
    \alpha_{1:t}\Delta &\leq \frac{1}{2T} \sum_{\tau\in[T]} \alpha_{1:\tau}\Delta_\tau+ \frac{D^2}{2\eta} + \alpha_tG^*D +D\sum_{\tau\in[t]}\alpha_\tau\|{\varepsilon_\tau}\| ~.
\end{align}

\begin{lemma}[Lemma C.2 in \citet{levy2023mu}]
\label{lemma3}
let $\{A_t\}_{t\in[T]}, \{B_t\}_{t\in[T]}$ be sequences of non-negative elements, and assume that for any $t\leq T$,
\begin{equation*}
    A_t \leq B_T + \frac{1}{2T} \sum_{t\in[T]} A_t~.
\end{equation*}
Then the following bound holds,
\begin{equation*}
    A_T \leq 2B_T ~.
\end{equation*}
\end{lemma}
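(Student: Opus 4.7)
The plan is to exploit the self-referential nature of the hypothesis: the right-hand side $B_T + \frac{1}{2T}\sum_{t\in[T]} A_t$ does not depend on the index $t$ on the left, so summing the inequality over $t\in[T]$ yields a bound on $\sum_t A_t$ in terms of itself, which we can solve. Once we have a genuine (non-recursive) bound on the aggregate, we substitute it back into the hypothesis specialized to $t=T$ to conclude.

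Concretely, the first step is to sum the assumed inequality $A_t \leq B_T + \frac{1}{2T}\sum_{t\in[T]} A_t$ over $t=1,\dots,T$. The constant-in-$t$ term contributes $T\cdot B_T$, and the aggregate term contributes $T\cdot \frac{1}{2T}\sum_{t\in[T]} A_t = \frac{1}{2}\sum_{t\in[T]} A_t$. The second step is to rearrange
\[
\sum_{t\in[T]} A_t \;\leq\; T B_T + \tfrac{1}{2}\sum_{t\in[T]} A_t
\]
into $\sum_{t\in[T]} A_t \leq 2T B_T$; here moving the self-referential term to the left-hand side is legitimate because it is finite (all $A_t$ are non-negative real numbers). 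The third and final step is to apply the hypothesis at $t=T$ and plug in this aggregate bound:
\[
A_T \;\leq\; B_T + \tfrac{1}{2T}\sum_{t\in[T]} A_t \;\leq\; B_T + \tfrac{1}{2T}\cdot 2T B_T \;=\; 2B_T.
\]

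There is essentially no obstacle to overcome; the argument is a two-line algebraic manipulation. The only conceptual subtlety worth noting is recognizing that the hypothesis is not a direct bound on $A_T$ (since the right-hand side involves $A_T$ itself via the sum), and that summing the inequalities decouples the recursion by producing a closed inequality for the aggregate. Non-negativity of $\{A_t\}$ is not strictly needed for the algebra, but it is compatible with — and justified by — the way this lemma is applied in the proof of Thm.~\ref{thm:muSGD}, where $A_t = \alpha_{1:t}\Delta_t$ is a non-negative weighted excess loss.
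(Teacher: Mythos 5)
Your proof is correct: summing the hypothesis over $t\in[T]$ to get $\sum_{t\in[T]}A_t\leq TB_T+\tfrac{1}{2}\sum_{t\in[T]}A_t$, solving for the aggregate, and substituting back at $t=T$ is exactly the standard argument for this lemma. The paper itself gives no proof (it imports the statement verbatim as Lemma C.2 of the cited reference), and your two-line derivation matches that reference's reasoning, so there is nothing to add.
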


In the next step, let us define two terms:  $A_t := \alpha_{1:t}\E\left[f(\bx_t)-f(\bx^*)\right]$ and $B_t := \frac{D^2}{2\eta}+ \alpha_tG^*D  + D\sum_{\tau\in[t]}\alpha_\tau\E\|{\varepsilon_\tau}\|$. Note that the series $\{B_t\}_t$ forms a non-decreasing series of non-negative values, implying $B_t \leq B_T$ for any $t\in[T]$. As a result of Eq. \eqref{eq:reg-final}, we have that $A_t\leq B_T + \frac{1}{2T}\sum_{\tau\in[T]}A_\tau$. 

By employing Lemma \ref{lem:syncFilter}, we have,
\begin{align}
\label{eq:eps1}
   \sum_{\tau\in[T]}\alpha_\tau^2\E \| \varepsilon_\tau \|^2 &\leq\sum_{\tau\in[T]}\alpha^2_\tau\left({\frac{4\tsigma^2}{\tau m} + \frac{12c_\delta\tsigma^2}{\tau} + 6c_\delta\xi^2}\right) \nonumber \\ 
   & = \left(\frac{4\tsigma^2}{m} + {12c_\delta\tsigma^2}\right)\sum_{\tau\in[T]}{\tau}+ 6c_\delta\xi^2 \sum_{\tau\in[T]}{\tau^2}\nonumber \\ &\leq T^{2} \left(\frac{4\tsigma^2}{m} + {12c_\delta\tsigma^2}\right) + T^3 6c_\delta\xi^2.
\end{align}
Utilizing Jensen's inequality, enables us to establish,
\begin{align}
\label{eq:eps2}
    \sum_{\tau\in[T]}\alpha_\tau\E \| \varepsilon_\tau \| &= \sum_{\tau\in[T]}\sqrt{\left(\alpha_\tau\E \| \varepsilon_\tau \|\right)^2} \leq \sqrt{T}\sqrt{\sum_{\tau\in[T]}{\alpha_\tau^2\E \| \varepsilon_\tau \|^2} } \nonumber \\& \leq \sqrt{T} \sqrt{T^{2} \left(\frac{4\tsigma^2}{m} + {12c_\delta\tsigma^2}\right) + T^3 6c_\delta\xi^2} \nonumber \\ & \leq T^{1.5} \tsigma \sqrt{\frac{4}{m} + {12c_\delta}} + 6T^2\xi \sqrt{c_\delta}
    .
\end{align}
where the first inequality employs Eq. \eqref{eq:eps1} and the second inequality uses \(\sqrt{a+b}\leq \sqrt{a}+\sqrt{b}\) for non-negative \(a, b \in \reals\). The explanation behind this can be seen through the following steps: $\left(\sqrt{a}+\sqrt{b}\right)^2 = a + 2\sqrt{ab} + b \geq a + b$, whereby taking the square root of both sides of this equation, we obtain the desired inequality.

Leveraging Lemma \ref{lemma3} and Eq. \eqref{eq:eps2}, and acknowledging that \(\alpha_{1:T}=\Theta(T^2)\), as \(\alpha_t=t\), it follows that:
\begin{align}
\label{eq:lr}
    \E[f(\bx_T)-f(\bx^*)] &\leq \frac{2}{T^2}B_T \nonumber\\
    & = \frac{D^2}{T^2\eta}+ \frac{2G^*D}{T}   + \frac{2D }{T^2}\sum_{\tau\in[T]}\alpha_\tau\E\|{\varepsilon_\tau}\| \nonumber\\
    & \leq \frac{D^2}{T^2\eta}+ \frac{2G^*D}{T} + \frac{2D \tsigma}{\sqrt{T}}\sqrt{{12c_\delta} + \frac{4}{m}} + 12D\xi \sqrt{c_\delta}~.
\end{align}

Finally, choosing the optimal \(\eta \leq \frac{1}{4TL}\) gives us:
\begin{equation*}
    \E[f(\bx_T)-f(\bx^*)] \leq O\left(\frac{G^*D + LD^2}{ T}  +  \frac{D\tsigma}{\sqrt{T}}\sqrt{c_\delta + \frac{1}{m}} + \sqrt{c_\delta} D\xi \right)~.
\end{equation*} 
\end{proof}

\paragraph{Learning Rate Range for Optimal Convergence}
\label{par:lr}
To simplify our analysis, we assume \( G^* \) is negligible, which is a reasonable assumption. For example, if \( \bx^* = \arg\min_{\bx \in \K} f(\bx) = \arg\min_{\bx \in \reals^d} f(\bx) \), then \( \nabla f(\bx^*) = 0 \), resulting in \( G^* = 0 \). Considering that $G^*$ is negligible and using Eq. \eqref{eq:lr}, we derive the range for the learning rate \(\eta\) to maintain optimal convergence:
\begin{align*}
\frac{D^2}{T^2\eta} \leq \frac{2D \tsigma}{\sqrt{T}}\sqrt{12c_\delta + \frac{4}{m}} + 12D\xi \sqrt{c_\delta}~.
\end{align*}
Thus, the lower bound for \(\eta\) is:
\begin{align*}
\eta &\geq \frac{D^2}{T^2} \cdot \frac{1}{\frac{2D \tsigma}{\sqrt{T}}\sqrt{12c_\delta + \frac{4}{m}} + 12D\xi \sqrt{c_\delta}}
= \frac{D}{2\tsigma T^{1.5}\sqrt{12c_\delta + \frac{4}{m}} + 12\xi T^2}~.
\end{align*}
Given that \(\eta \leq \frac{1}{4TL}\), the range for \(\eta\) is:
\begin{align}
\eta \in \left[\frac{D}{2\tsigma T^{1.5}\sqrt{12c_\delta + \frac{4}{m}} + 12\xi T^2}, \frac{1}{4TL}\right]~.
\end{align}
This approach allows us to employ a broad spectrum of learning rates \(\eta \in [\eta_{min}, \eta_{max}]\), such that \(\eta_{max}/\eta_{min} = O(\sqrt{T})\) when considering the dominant parts at $T$. In contrast, the standard momentum method \cite{karimireddy2020Byzantine} requires a low-range of learning rates such that \(\eta_{max}/\eta_{min} = O(1)\) to achieve an order optimal bound. Specifically, for a starting point $\bx_0\in\real^d$ and $\bx^* := \arg\min_{\bx \in \real^d} f(\bx)$, the learning rate for momentum with adaptation to our notations is given by:
\begin{align*}
\eta_{momentum} \simeq \min\left\{O\left( \sqrt{\frac{L(f(\bx_0)-f(\bx^*))+c_\delta(\xi^2+\sigma^2)}{TL^2\sigma^2(\frac{1}{m}+c_\delta)}}\right), \frac{1}{8L}\right\}~.
\end{align*}

\subsubsection{Proof of Lemma ~\ref{lem:RegretBound}}

\begin{proof}[Proof of Lemma ~\ref{lem:RegretBound}]
The update rule $\bw_{\tau+1} = \Pi_\K (\bw_\tau - \eta\alpha_\tau\hat{\bd}_\tau)$ can be expressed as a convex optimization problem within the set $\K$:
\begin{align*}
    \bw_{\tau + 1} &= \Pi_{\K}\left({\bw_{\tau} - \eta \alpha_{\tau} \hat{\bd}_\tau}\right) 
    \\ &= \arg\min_{\bw\in\K} \|\bw_{\tau} - \eta \alpha_{\tau} \hat{\bd}_\tau - \bw\|^2 
    \\ &= \arg\min_{\bw\in\K}\{ \alpha_\tau \langle \hat{\bd}_\tau, \bw - \bw_\tau\rangle + \frac{1}{2\eta} \|\bw - \bw_\tau\|^2\}~.
\end{align*}

Here, the first equality is derived from the definition of the update rule, the second stems from the property of projection, and the final equality is obtained by reformulating the optimization problem in a way that does not affect the minimum value.

Given that \(\bw_{\tau+1}\) is the optimal solution of the above convex problem, by the optimality conditions, we have that:
    \begin{align*}
        \left\langle \alpha_\tau\hat{\bd}_\tau + \frac{1}{\eta}(\bw_{\tau+1} - \bw_{\tau}), \bw-\bw_{\tau+1}\right\rangle \geq 0, \quad \forall \bw\in\K~.
    \end{align*}

Rearranging this, summing over $t\geq1$ iterations, and taking $\bw=\bx^*$, we derive:
    \begin{align*}
        \sum_{\tau\in[t]}\alpha_\tau \langle \hat{\bd}_\tau, \bw_{\tau+1}-\bx^*\rangle & \leq \frac{1}{\eta} \sum_{\tau\in[t]}\langle \bw_\tau - \bw_{\tau+1}, \bw_{\tau+1}-\bx^*\rangle \\ &=  \frac{1}{2\eta}\sum_{\tau\in[t]} \left( \|\bw_\tau-\bx^*\|^2 - \|\bw_{\tau+1}-\bx^*\|^2 - \|\bw_\tau-\bw_{\tau+1}\|^2\right) \\ &=  \frac{1}{2\eta} \left( \|\bw_1-\bx^*\|^2 - \|\bw_{t+1}-\bx^*\|^2 - \sum_{\tau\in[t]}\|\bw_\tau-\bw_{\tau+1}\|^2\right) \\ & \leq \frac{D^2}{2\eta} -\frac{1}{2\eta}\sum_{\tau=1}^t \|\bw_{\tau}-\bw_{\tau+1}\|^2~.
    \end{align*}
The first equality equality is achieved through algebraic manipulation, and the last inequality follows the assumption in Eq. \eqref{eq:bounded_diameter}. 
\end{proof}

\subsubsection{Proof of Lemma ~\ref{lemma2}}
\begin{proof}[Proof of Lemma ~\ref{lemma2}]

\begin{lemma}[Lemma C.1 in \citet{levy2023mu}]
\label{lem:smooth-functions}
let $f:\real^d \rightarrow \real$ be an $L$-smooth function with a global minimum $\bw^*$, then for any $\bw\in\real^d$ we have,
\begin{equation*}
    \|{\nabla f(\bx)}\|^2 \leq 2L(f(\bx)-f(\bw^*))~.
\end{equation*}
\end{lemma}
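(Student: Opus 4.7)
The plan is to prove this standard $L$-smoothness inequality by exploiting the quadratic upper bound that smoothness provides, and evaluating that bound at the point obtained by taking a single gradient step of size $1/L$ from $\bx$. Since $\bw^*$ is the global minimum, $f$ at that trial point cannot be smaller than $f(\bw^*)$, which yields the desired inequality after rearrangement.

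Concretely, first I would invoke the standard consequence of $L$-smoothness: for all $\bx,\by \in \reals^d$,
\begin{equation*}
    f(\by) \leq f(\bx) + \langle \nabla f(\bx), \by - \bx\rangle + \frac{L}{2}\|\by - \bx\|^2~.
\end{equation*}
(This follows from integrating $\nabla f$ along the segment $[\bx,\by]$ and applying the Lipschitz bound on $\nabla f$; I would state it as an immediate implication of the smoothness assumption rather than re-derive it.)

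Next, I would choose the specific trial point $\by := \bx - \frac{1}{L}\nabla f(\bx)$. Plugging this into the quadratic upper bound, the inner-product term becomes $-\frac{1}{L}\|\nabla f(\bx)\|^2$ and the squared-norm term becomes $\frac{L}{2} \cdot \frac{1}{L^2}\|\nabla f(\bx)\|^2 = \frac{1}{2L}\|\nabla f(\bx)\|^2$, so the two combine to give
\begin{equation*}
    f(\by) \leq f(\bx) - \frac{1}{2L}\|\nabla f(\bx)\|^2~.
\end{equation*}

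Finally, since $\bw^*$ is a global minimizer of $f$, we have $f(\bw^*) \leq f(\by)$. Substituting and rearranging yields $\|\nabla f(\bx)\|^2 \leq 2L(f(\bx) - f(\bw^*))$, which is exactly the claim. There is no genuine obstacle here — the proof is a textbook one-line computation once the right trial point is chosen; the only care needed is to ensure the argument uses only smoothness (not convexity or strong convexity) and that $\bw^*$ being a global minimizer over $\reals^d$ is what allows the comparison $f(\bw^*)\leq f(\by)$ for the unconstrained point $\by$.
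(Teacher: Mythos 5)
Your proof is correct: the descent-lemma bound evaluated at the trial point $\by = \bx - \frac{1}{L}\nabla f(\bx)$, followed by the comparison $f(\bw^*) \leq f(\by)$, is the standard argument for this inequality, and it uses only global $L$-smoothness and global minimality of $\bw^*$, exactly as required. The paper itself imports this as Lemma C.1 of \citet{levy2023mu} without reproving it, and your derivation matches the canonical proof of that result.
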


     Let us define the function $h(\bx)=f(\bx)-f(\bx^*)-\langle \nabla f(\bx^*), \bx - \bx^* \rangle$. Given the convexity of $f(\bx)$, we have $f(\bx)-f(\bx^*) \geq \langle \nabla f(\bx^*), \bx - \bx^* \rangle\geq 0$, leading to $h(\bx)\geq 0$. As $h(\bx^*)=0$, this implies that $\bx^*$ is the global minimum of $h$. Applying Lemma \ref{lem:smooth-functions}, gives us,
    \begin{align*}
        \|{\nabla f(\bx)}-\nabla f(\bx^*)\|^2 = \|{\nabla h(\bx)}\|^2 \leq 2L(f(\bx)-f(\bx^*))~.
    \end{align*}
\end{proof}

\section{Robust Aggregators Analysis}

\begin{definition}[Rephrased from \citet{allouah2023fixing}] \textnormal{\( (\kappa, {\delta}) \)-\textbf{robustness}}.
        \label{def:kappa-robust}
        Let $\frac{|\B|}{m}={\delta}<1/2$ and $\kappa \geq 0$. An aggregation rule $\A$ is called $(\kappa, {\delta})$-robustness if for any $m$ vectors $\bx_1, \ldots, \bx_m$, and any set $S\subseteq[m]$ such that $|S|=|\GGG|$, we have,
        \begin{align*}
           \|{\hat{\bx} - \bar{\bx}_S} \|^2 \leq \frac{\kappa}{|S|} \sum_{i\in S} \|{\bx_i - \Bar{\bx}_S}\|^2 ~,
        \end{align*}
        where $\bar{\bx}_S:= \frac{1}{|S|} \sum_{i \in S} {\bx}_i$.
\end{definition}

\begin{lemma}
\label{lem:kappa-robust}
Let $\hat{\bx}$ be $(\kappa, {\delta})$-robustness where ${\delta}=\frac{|\B|}{m}$.  We define $(\kappa, {\delta})$-robustness as a rephrased version of the robust concept originally introduced in \citet{allouah2023fixing} and outlined in Definition \ref{def:kappa-robust}. Then, $\hat{\bx}$ is $(\kappa, {\delta})$-robust.

\begin{proof}
Given that $\hat{\bx}$ is $(\kappa, {\delta})$-robustness, by choosing $S=\GGG$, we can deduce that:
\begin{align*}
   \E\|{\hat{\bx} - \bar{\bx}_\GGG} \|^2 \leq \frac{\kappa}{|\GGG|} \sum_{i\in \GGG} \E\|{\bx_i - \Bar{\bx}_\GGG}\|^2 \leq \frac{\kappa}{|\GGG|} \sum_{i\in \GGG} \rho_i^2 = \kappa \rho^2~,
\end{align*}
where the second inequality follows the notations in Definition \ref{def1}.
\end{proof}
\end{lemma}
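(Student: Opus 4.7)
The plan is to derive the conclusion by a direct one-line specialization of Definition~\ref{def:kappa-robust} to the honest set, followed by taking expectations and invoking the per-worker variance bounds supplied by Definition~\ref{def1}. Since Definition~\ref{def:kappa-robust} asserts a deterministic inequality that must hold for \emph{every} subset $S\subseteq[m]$ of size $|\GGG|$, the natural move is to choose $S=\GGG$, which matches the quantity $\bar{\bx}_\GGG$ appearing in the target bound.

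First I would write down the hypothesis by applying Definition~\ref{def:kappa-robust} with the specific choice $S=\GGG$, yielding the pointwise inequality
\begin{equation*}
\|\hat{\bx} - \bar{\bx}_\GGG\|^2 \leq \frac{\kappa}{|\GGG|} \sum_{i\in\GGG} \|\bx_i - \bar{\bx}_\GGG\|^2.
\end{equation*}
Next I would take expectations over both sides (expectation preserves inequalities), giving
\begin{equation*}
\E\|\hat{\bx} - \bar{\bx}_\GGG\|^2 \leq \frac{\kappa}{|\GGG|} \sum_{i\in\GGG} \E\|\bx_i - \bar{\bx}_\GGG\|^2.
\end{equation*}
Then I would invoke the per-worker variance hypothesis from Definition~\ref{def1}, namely $\E\|\bx_i-\bar{\bx}_\GGG\|^2\leq\rho_i^2$ for each $i\in\GGG$, and identify the resulting average with $\rho^2:=\frac{1}{|\GGG|}\sum_{i\in\GGG}\rho_i^2$, concluding $\E\|\hat{\bx}-\bar{\bx}_\GGG\|^2\leq \kappa\rho^2$, which is precisely the $(c_\delta,\delta)$-robustness bound of Definition~\ref{def1} with $c_\delta=\kappa$.

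There is essentially no obstacle here: the argument is a specialization-plus-expectation, and no new machinery is required. The one subtlety worth noting is that Definition~\ref{def:kappa-robust} is phrased as a deterministic bound holding uniformly over all subsets $S$ of the right cardinality, which is exactly what allows the substitution $S=\GGG$ even though $\GGG$ is determined by the environment (the adversary) rather than the aggregator. Beyond that, one should be careful that the expectations in Definition~\ref{def1} and the non-expectation statement of Definition~\ref{def:kappa-robust} are compatible; this is automatic by monotonicity of expectation, so no extra conditioning or independence argument is needed.
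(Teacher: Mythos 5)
Your proposal is correct and follows exactly the same route as the paper's proof: specialize Definition~\ref{def:kappa-robust} to $S=\GGG$, take expectations, and apply the per-worker bounds $\E\|\bx_i-\bar{\bx}_\GGG\|^2\leq\rho_i^2$ to conclude the $\kappa\rho^2$ bound. No differences worth noting.
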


\section{$\mu^2$-SGD Overview}
Our approach adopts the \(\mu^2\)-SGD algorithm \citep{levy2023mu}, a novel method that combines two distinct momentum-based mechanisms for enhancing variance reduction.
\paragraph{AnyTime-SGD.}
The first mechanism originates from the AnyTime-SGD algorithm \citep{cutkosky2019anytime}. This algorithm employs a learning rate $\eta>0$ and a series of non-negative weights \(\{\alpha_t\}_t\). It operates by maintaining two series of query points, \(\{\bw_t\}_t\) and \(\{\bx_t\}_t\), and initializes \(\bx_1=\bw_1\). At each step \(t\), the algorithm first updates \(\bw_{t+1}\) in a manner akin to the standard SGD, but adjusted by a factor of \(\alpha_t\) for the gradient \(\bg_t:= \nabla f(\bx_t; \bz_t)\). Subsequently, it calculates the next query point \(\bx_{t+1}\) through a weighted average of the accumulated query points up to step \(t+1\), as shown in the following equation:
\begin{gather*}
    \bw_{t+1}=\Pi_{\K}\left({\bw_{t}- \eta\alpha_{t} \bg_{t}}\right), \quad
    \bx_{t+1}=\frac{1}{\alpha_{1:t+1}} \sum_{k\in[t+1]} \alpha_k \bw_k~.
\end{gather*}
A fundamental characteristic of the AnyTime-SGD is that the query point \(\bx_T\) also serves as the algorithm's output. This contrasts with standard SGD, which typically outputs the average of all query points or randomly selects one. Thus, at any given iteration \(t\), AnyTime-SGD consistently offers a potential solution for the optimization problem.
\paragraph{STORM.}
While the first mechanism employs weighted averaging of query points, the second leverages weighted averaging of the stochastic gradient estimators. This second mechanism is derived from the Stochastic Recursive Momentum (STORM) approach \citep{cutkosky2019momentum}. It utilizes a corrected momentum technique, which serves as an estimator for the actual gradient. This correction is implemented by adding a bias to the standard momentum equation, thereby refining the gradient estimates from previous iterations. This method is defined as follows:
\begin{equation*}
\bd_{t}=\nabla f(\bx_{t};\bz_{t}) + (1-\beta_{t})(\bd_{t-1} - \nabla f(\bx_{t-1};\bz_{t}))~.
\end{equation*}
A key feature of this approach is its ability to achieve implicit variance reduction through the use of the corrected momentum. This method not only adjusts the gradient estimations but also enhances the overall stability.
\paragraph{$\mu^2$-SGD.}  
The \(\mu^2\)-SGD algorithm integrates the AnyTime update step with the STORM gradient estimator. Instead of using the stochastic gradient $\bg_t$, it updates query points using a corrected momentum $\bd_t$. The update process is defined as follows:
\begin{gather}
    \label{eq:mu2sgd}
    \bw_{t+1}=\Pi_{\K}\left({\bw_{t}- \eta\alpha_{t} \bd_{t}}\right), \
    \bx_{t+1}=\frac{1}{\alpha_{1:t+1}} \sum_{k\in[t+1]} \alpha_k \bw_k, \
    \bx_1=\bw_1~.
\end{gather}
A primary aspect of the \(\mu^2\)-SGD algorithm is achieved by setting the momentum weights \(\{\beta_t:=1/t\}_t\) and the AnyTime weights \(\{\alpha_t:=t\}_t\). This enables $\mu^2$-SGD to significantly reduce the stochastic variance $\E\|\varepsilon_t\|^2 := \E\|\bd_t - \nabla f(\bx_t)\|^2 \leq O(\tsigma^2/t)$ at step $t$, where $\tsigma^2 \leq O(\sigma^2 + D^2 \sigma_L^2)$. This means that the variance decreases with each iteration, eventually becoming negligible, contrasting with standard SGD where the variance  $\E\|\varepsilon^{\textnormal{SGD}}_t\|^2 := \E\|\bg_t - \nabla f(\bx_t)\|^2$ remains uniformly bounded. Moreover, by choosing $\beta_t = 1/t$, $\mu^2$-SGD considers the entire gradient history \citep{karimireddy2021learning}, aligning with the AnyTime update's consideration of the entire history of query points. This approach differs from other momentum methods \citep{karimireddy2021learning, karimireddy2020Byzantine, allouah2023fixing}, where the momentum weight \(\beta_t\) is tied to the learning rate \(\eta_t\), thereby limiting the gradient history consideration to just \(\sqrt{t}\) gradients. 

\section{Experiments}
\label{app:exp}
\subsection{Technical Details}
\begin{remark}
    Recall that the AnyTime update step of \(\bx_t\) is defined as: 
    \begin{align*}
        \bx_t := \frac{\alpha_t \bw_t + \alpha_{1:t-1}\bx_{t-1}}{\alpha_{1:t}}.
    \end{align*} 
    This formulation provides an alternative representation of the update step, which can be expressed as:
    \begin{align*}
        \bx_{t} = \gamma_t \bw_t + (1 - \gamma_t) \bx_{t-1},
    \end{align*}
    where \(\gamma_t := \frac{\alpha_{t}}{\alpha_{1:t}}\). Furthermore, by setting \(\alpha_t = C\alpha_{1:t-1}\) for a constant \(C>0\), it follows that \(\gamma_t = \frac{{C\alpha_{1:t-1}}}{C\alpha_{1:t-1}+\alpha_{1:t-1}} = \frac{C}{C + 1}\) is a constant for every \(t \geq 1\).
\end{remark}

\subsubsection{Datasets}

We evaluated our approach on two benchmark datasets: CIFAR-10 \citep{krizhevsky2014cifar} and MNIST \citep{lecun2010mnist}.

\begin{itemize}
    \item \textbf{CIFAR-10}: This dataset consists of 60,000 32x32 color images in 10 classes, with 6,000 images per class. The dataset is divided into 50,000 training images and 10,000 testing images.
    \item \textbf{MNIST}: This dataset consists of 70,000 28x28 grayscale images of handwritten digits in 10 classes, with 60,000 training images and 10,000 testing images.
\end{itemize}

\subsubsection{Model Architectures}

To demonstrate the efficiency of momentum parameters that account for the entire history, we conducted experiments using simple convolutional networks. Our choice of simple conv networks was driven by the need to mitigate the sensitivity to numerical errors, which is often encountered in more complex models with very small momentum parameters. Additionally, we extended our evaluation to a more complex scenario by using the CIFAR-10 dataset with a ResNet18 model, employing fixed momentum parameters for comparison.

\begin{itemize}
    \item \textbf{Simple Conv Network for MNIST}:  This model consists of two convolutional layers with batch normalization and ReLU activation, followed by max pooling layers. The first convolutional layer has 16 filters, and the second layer has 32 filters. The output from the convolutional layers is flattened and passed through a fully connected layer with 1,568 units for classification.
    \item \textbf{Simple Conv Network for CIFAR-10}: This model also consists of two convolutional layers with batch normalization and ReLU activation, followed by max pooling layers. The first convolutional layer has 16 filters, and the second layer has 32 filters. The output from the convolutional layers is flattened and passed through a fully connected layer with 2,048 units for classification.
    \item \textbf{ResNet18 for CIFAR-10}: We used the standard ResNet18 architecture, which includes multiple residual blocks with convolutional layers, batch normalization, and ReLU activation. This model is designed to handle more complex image classification tasks.
\end{itemize}

\subsubsection{Algorithms and Baselines}

We evaluate our proposed \(\mu^2\)-SGD algorithm by comparing it with the standard momentum method as described by \citet{karimireddy2021learning}. The specific momentum parameters utilized in our experiments are detailed below:

\begin{itemize}
    \item \textbf{Standard Momentum}: We employ \(\beta_t = 0.9\), as suggested by \citet{karimireddy2021learning}. 
    
    \item \textbf{\(\mu^2\)-SGD}: We experiment with two distinct parameter settings:
        \begin{itemize}
            \item Dynamic parameters: \(\alpha_t = t\) and \(\beta_t = 1/t\), in line with our theoretical suggestion.
            \item Fixed parameters: \(\gamma_t = 0.1\) and \(\beta_t = 0.9\), more similar to the fixed standard momentum parameter. 
        \end{itemize}
        Detailed configurations are provided in Section \ref{sec:conf}.
\end{itemize}

\subsubsection{Evaluation Metrics}

To assess the performance of our algorithm, we used the following metric:

\begin{itemize}
    \item \textbf{Accuracy}: The proportion of correctly classified instances over the total instances.
\end{itemize}

\subsubsection{Experimental Setup}
\label{sec:conf}

We conducted a hyperparameter search to identify the optimal settings for our experiments. 

\textbf{Learning Rate}: We experimented with a range of learning rates from $10^{-4}$ to $10^{1}$. For experiments requiring a single learning rate, we selected 0.1, which was found to be optimal within this range.

The table below summarizes the configurations used in our experiments, including the settings for $\alpha_t$, $\beta_t$, and $\gamma_t$ for the $\mu^2$-SGD algorithm, as well as the dataset, model, batch size, and gradient clipping values to enhance performance, as implemented in \citet{allouah2023fixing}. Note that gradient clipping was not applied in the experiments involving a wide range of learning rates due to its impact on the size of the update step.

\begin{table}[h]
\centering
\begin{tabular}{cccccccc}
\toprule
\textbf{Configuration} & \textbf{$\alpha_t$} & \textbf{$\beta_t$} & \textbf{$\gamma_t$} & \textbf{Dataset} & \textbf{Model} & \textbf{Batch Size} & \textbf{Gradient Clipping} \\ \midrule
\textbf{1} & $t$ & $1/t$ & - & MNIST & Simple Conv & 4 & 2 \\ 
\textbf{2} & $t$ & $1/t$ & - & CIFAR-10 & Simple Conv & 64 & 5 \\ 
\textbf{3} & - & $0.9$ & $0.1$ & MNIST & Simple Conv & 64 & - \\ 
\textbf{4} & - & $0.9$ & $0.1$ & CIFAR-10 & ResNet18 & 8 & - \\ 
\bottomrule
\end{tabular}
\caption{Experimental Configurations}
\label{tab:configurations}
\end{table}

\subsubsection{Attacks}

To evaluate the robustness of our algorithms, we tested them against the following adversarial attacks:
\begin{itemize}
    \item \textbf{Label-Flipping}  \citep{allen2020Byzantine}: This attack flips the original target labels to incorrect labels by subtracting the original label from 9,
    \[
     \text{flipped\_label} = 9 - \text{original\_label}.
     \]
    \item \textbf{Sign-Flipping}  \citep{allen2020Byzantine}: This attack flips the signs of the momentums, in the spirit of faults that occur when bits are transmitted incorrectly from workers to the central server.
    \[
     \text{Byzantine\_update} = -\text{worker\_momentum}.
     \]
    \item \textbf{A Little Is Enough} \citep{baruch2019little}: This attack is designed to lie under the stochastic "noise blanket." It calculates the maximum allowable deviation \(z_{\text{max}}\) based on the number of honest workers, then perturbs the honest updates by subtracting the product of the standard deviation and \(z_{\text{max} }\) from the mean of the honest updates.
    \[
     \text{Byzantine\_update} = \text{mean}(\text{honest\_momentums}) - \text{std}(\text{honest\_momentums}) \cdot z_{\text{max} }.
     \]
     \item \textbf{Empire} \citep{xie2020fall}: This attack scales the mean of the honest momentums by a small factor \(\epsilon\) in the negative direction, 
     \[
     \text{Byzantine\_update} = -\epsilon \cdot \text{mean}(\text{honest\_momentums}).
     \]
      where the mean and standard deviation are calculated coordinate-wise, and we set \(\epsilon = 0.5\).
\end{itemize}

\subsubsection{Implementation Details}

The implementation was carried out using PyTorch. The code was written in Python and executed on NVIDIA A30 GPU for MNIST and NVIDIA GeForce RTX 3090 GPU for CIFAR-10. All experiments were repeated three times with different random seeds to ensure statistical significance, and the results reported are the averages of these runs.

\subsection{Complete Experimental Results}

\subsubsection{MNIST}

\begin{figure}[ht]
\vskip 0.2in
\begin{center}
\centerline{\includegraphics[width=0.8\columnwidth]{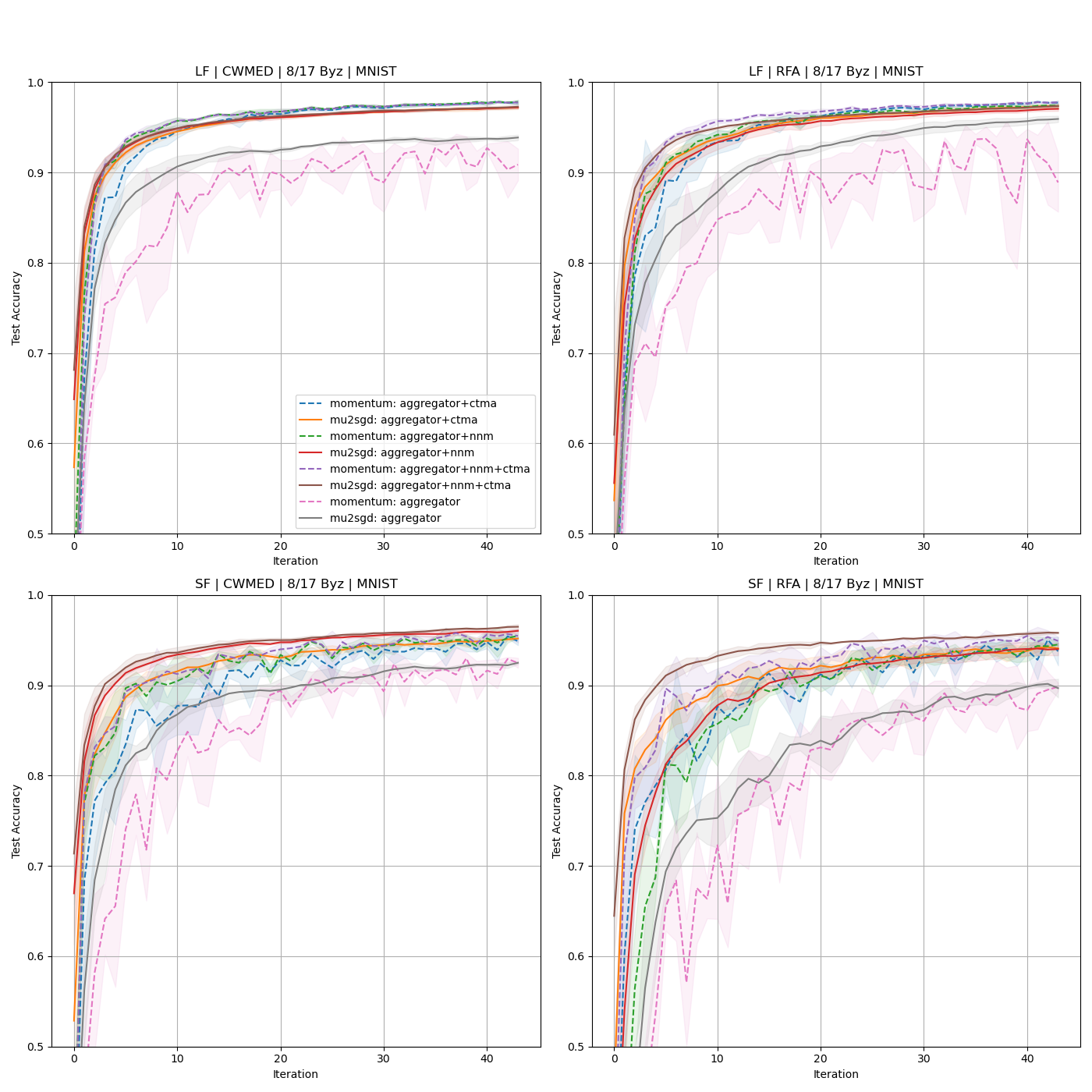}}
\caption{Performance comparison of CTMA with existing meta-aggregators, and $\mu^2$-SGD with momentum under \textit{sign-flipping} and \textit{label-flipping} attacks with 8/17 Byzantine workers on the MNIST dataset (Conf. 1 in Table \ref{tab:configurations}). Here, we observe that CTMA enhances the performance of both $\mu^2$-SGD and momentum. Furthermore, CTMA performs at least as well as, if not better than, other meta-aggregators. The integration of nnm with CTMA can further improve performance. Notably, $\mu^2$-SGD demonstrates high stability compared to momentum, even without the assistance of any meta-aggregator. This stability is valuable for increasing resilience against heavy attacks and boosting overall performance.}
\label{fig:lf+sf-8}
\end{center}
\vskip -0.2in
\end{figure}

\begin{figure}[ht]
\vskip 0.2in
\begin{center}
\centerline{\includegraphics[width=0.8\columnwidth]{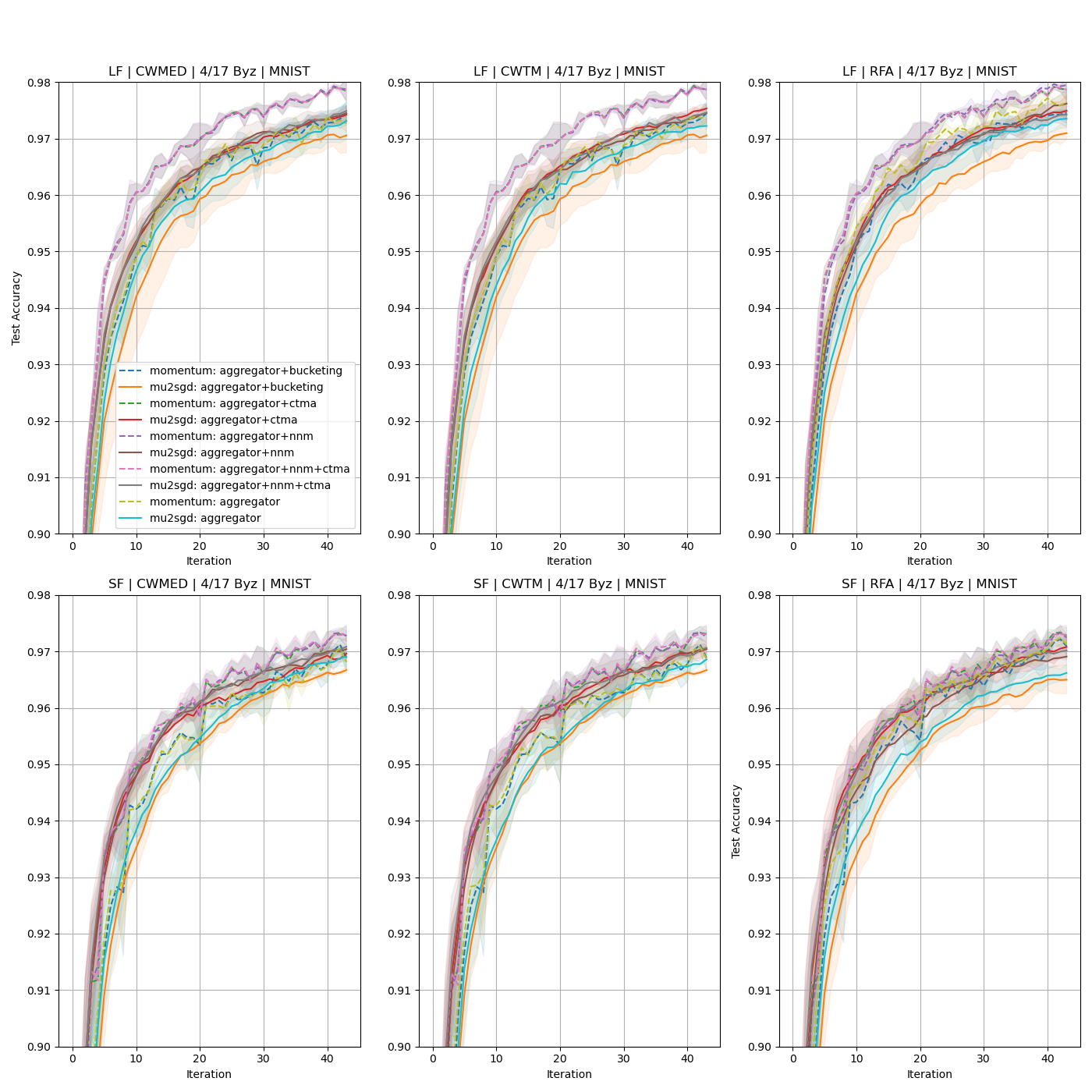}}
\caption{Performance comparison of CTMA with existing meta-aggregators, and $\mu^2$-SGD with momentum under a weaker \textit{sign-flipping} and \textit{label-flipping} attack with 4/17 Byzantine workers on the MNIST dataset (Conf. 1 in Table \ref{tab:configurations}). In this scenario, even though the performance of momentum is noisier over iterations compared to $\mu^2$-SGD, it does not require additional stability to perform effectively and outperforms $\mu^2$-SGD. CTMA enhances the performance of both $\mu^2$-SGD and momentum, maintaining consistent improvement as observed under the heavier attacks shown in Figure \ref{fig:lf+sf-8}.}
\label{fig:mu2sgd-vs-momentum}
\end{center}
\vskip -0.2in
\end{figure}

\begin{figure}[ht]
\vskip 0.2in
\begin{center}
\centerline{\includegraphics[width=0.8\columnwidth]{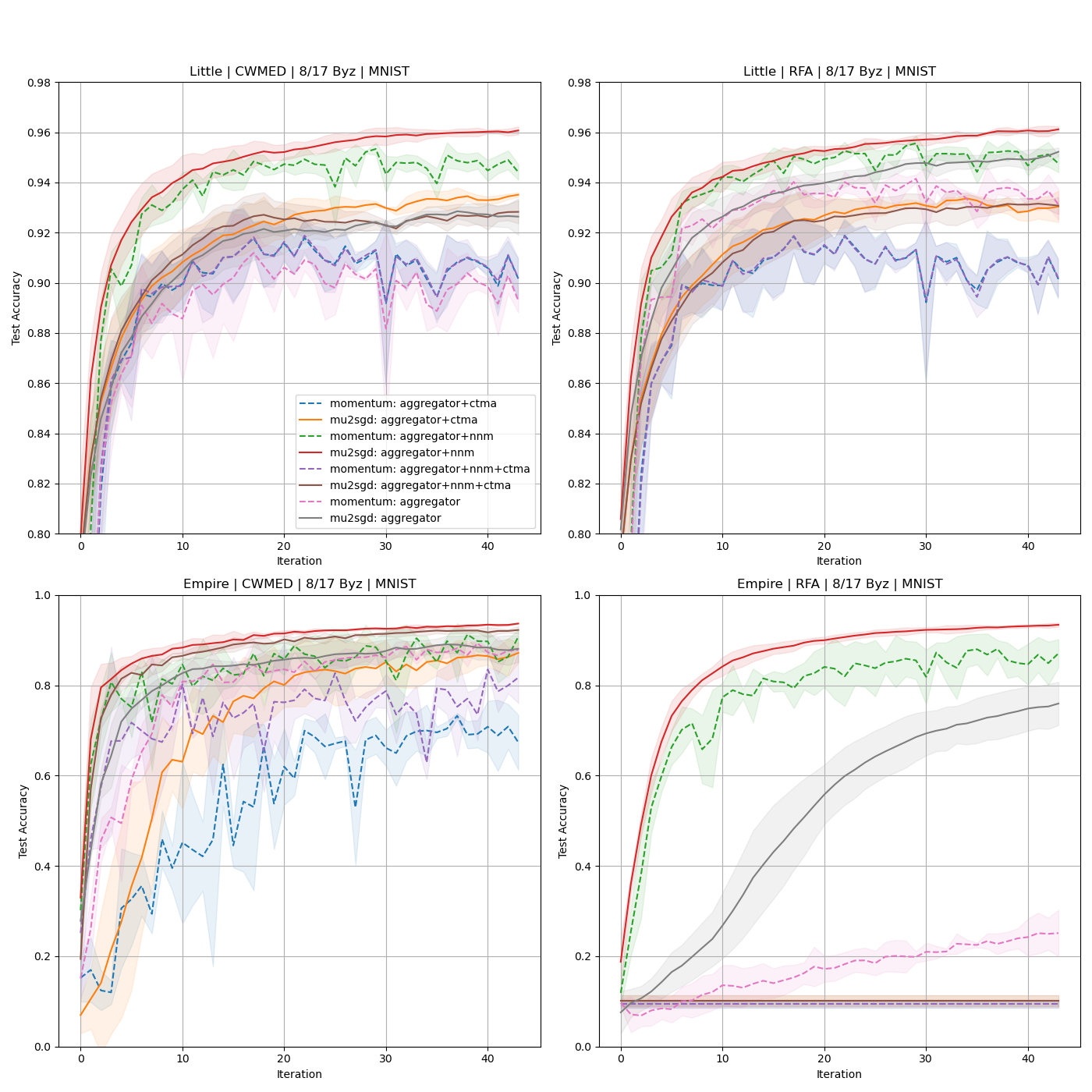}}
\caption{Performance comparison of CTMA with existing meta-aggregators, and $\mu^2$-SGD with momentum under SOTA low-variance attacks, \textit{empire} and \textit{little}, with 8/17 Byzantine workers on the MNIST dataset (Conf. 1 in Table \ref{tab:configurations}). These low-variance attacks are harder to detect and represent an especially severe attack scenario. In this context, CTMA performs poorly due to its strong reliance on the variance among the workers' outputs. In contrast, NNM performs very effectively for both momentum and $\mu^2$-SGD. The low variance in $\mu^2$-SGD enhances the effectiveness of NNM, making $\mu^2$-SGD more robust and particularly valuable against heavy low-variance attacks compared to momentum, with or without the addition of NNM.}
\label{fig:ctma-fails}
\end{center}
\vskip -0.2in
\end{figure}

\begin{figure}[ht]
\vskip 0.2in
\begin{center}
\centerline{\includegraphics[width=0.8\columnwidth]{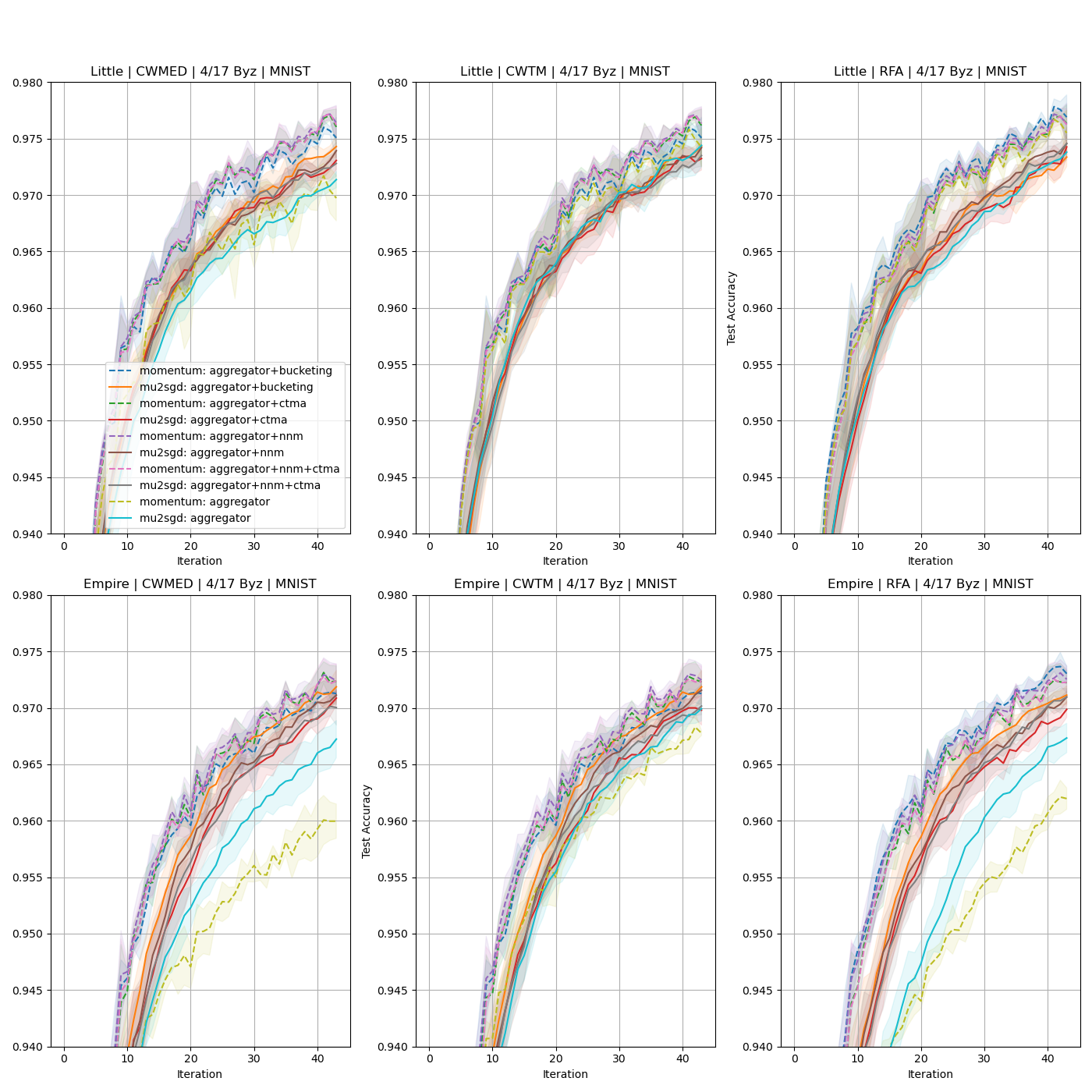}}
\caption{Performance comparison of CTMA with existing meta-aggregators, and $\mu^2$-SGD with momentum under SOTA low-variance attacks, \textit{empire} and \textit{little}, with 4/17 Byzantine workers on the MNIST dataset (Conf. 1 in Table \ref{tab:configurations}).  These are low-variance attacks in a weaker scenario with 4 Byzantine workers. In this context, CTMA performs well and often compares favorably to other meta-aggregators, enhancing the performance of both $\mu^2$-SGD and momentum. Furthermore, for the \textit{empire} attack, the $\mu^2$-SGD outperforms momentum without the addition of a meta-aggregator. However, when a meta-aggregator is added, the inherent stability of $\mu^2$-SGD becomes less beneficial, and momentum tends to outperform it.}
\label{fig:ctma-low-mnist}
\end{center}
\vskip -0.2in
\end{figure}

\begin{figure}[ht]
\vskip 0.2in
\begin{center}
\centerline{\includegraphics[width=0.8\columnwidth]{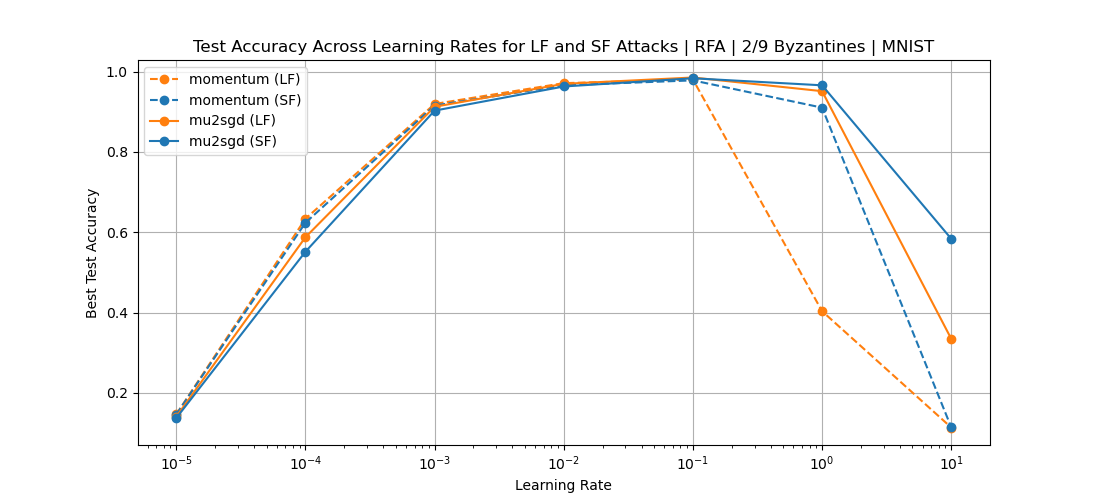}}
\caption{Performance comparison of $\mu^2$-SGD with momentum across a wider range of learning rates under \textit{label-flipping} and \textit{sign-flipping} attacks with 2/9 Byzantine workers on the MNIST dataset (Conf. 3 in Table \ref{tab:configurations}).}
\end{center}
\vskip -0.2in
\end{figure}

\clearpage
\subsubsection{CIFAR-10}

\begin{figure}[ht]
\vskip 0.2in
\begin{center}
\centerline{\includegraphics[width=0.8\columnwidth]{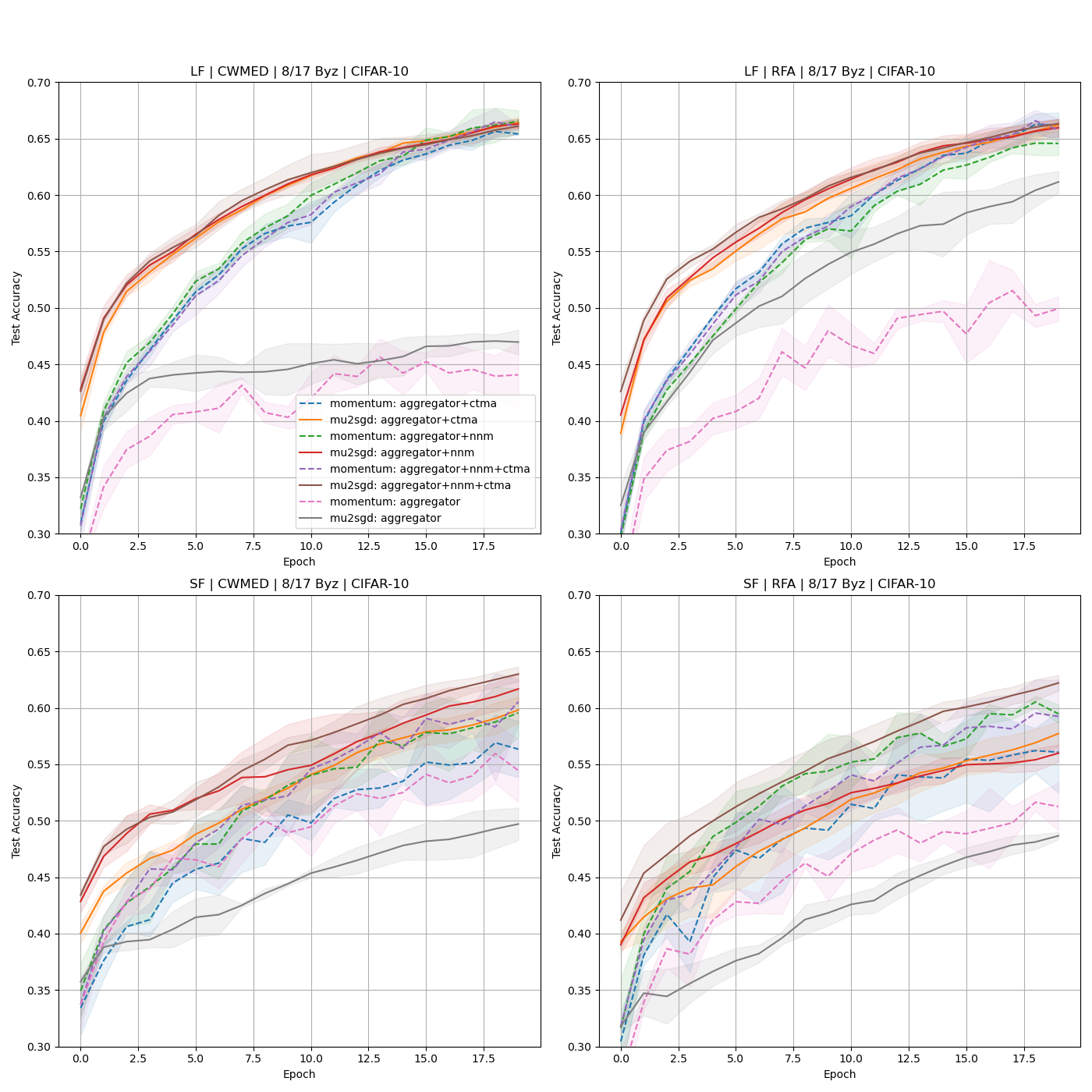}}
\caption{Performance comparison of CTMA with existing meta-aggregators, and $\mu^2$-SGD with momentum under \textit{sign-flipping} and \textit{label-flipping} attacks with 8/17 Byzantine workers on the CIFAR-10 dataset (Conf. 2 in Table \ref{tab:configurations}).}
\end{center}
\vskip -0.2in
\end{figure}

\begin{figure}[ht]
\vskip 0.2in
\begin{center}
\centerline{\includegraphics[width=0.8\columnwidth]{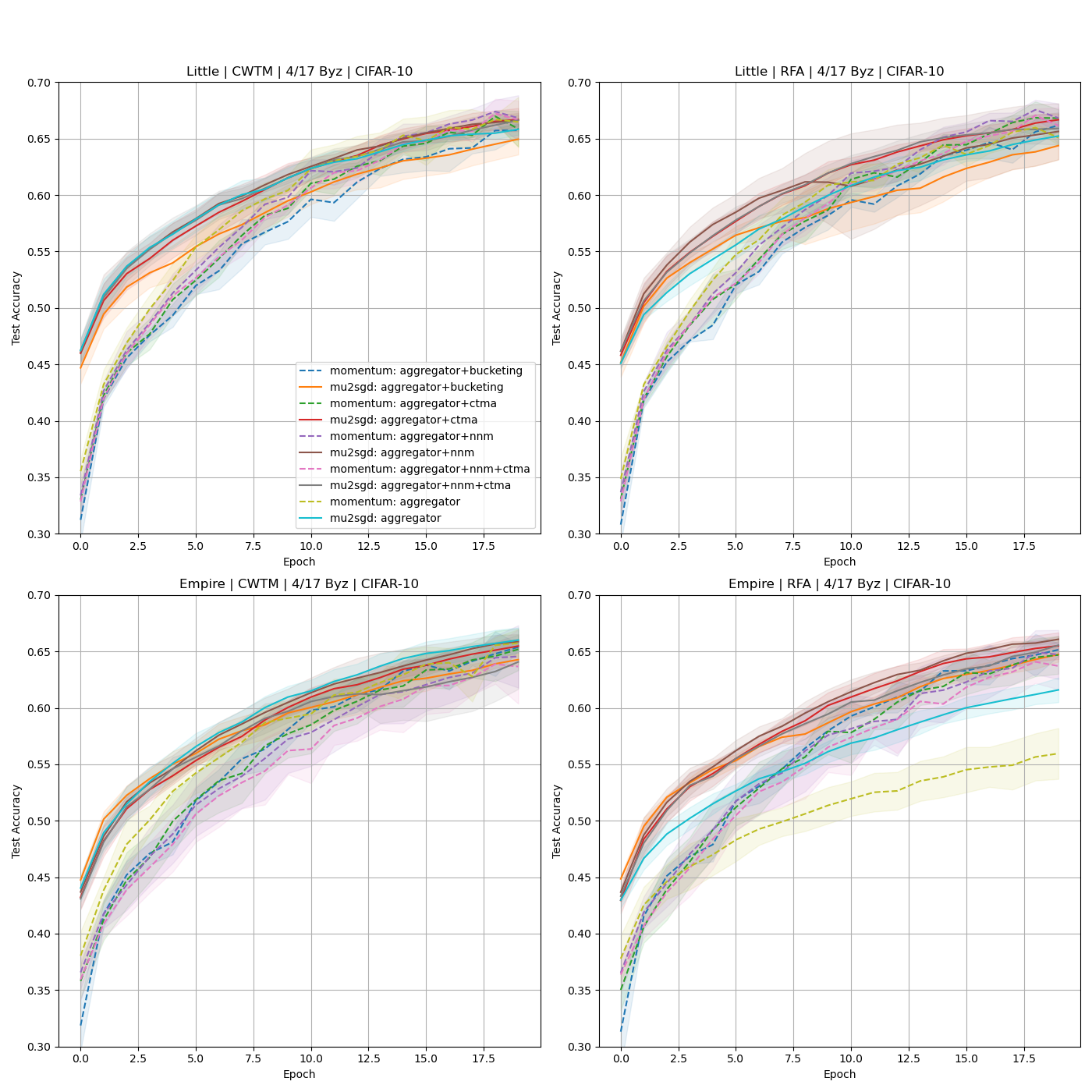}}
\caption{Performance comparison of CTMA with existing meta-aggregators, and $\mu^2$-SGD with momentum under SOTA low-variance attacks, \textit{empire} and \textit{little}, with 4/17 Byzantine workers on the CIFAR-10 dataset (Conf. 2 in Table \ref{tab:configurations}).}
\label{fig:ctma-low-cifar}
\end{center}
\vskip -0.2in
\end{figure}

\begin{figure}[ht]
\vskip 0.2in
\begin{center}
\centerline{\includegraphics[width=0.8\columnwidth]{cifar10.png}}
\caption{Performance comparison of $\mu^2$-SGD with momentum across a wider range of learning rates under \textit{label-flipping} and \textit{sign-flipping} attacks with 2/17 Byzantine workers on the CIFAR-10 dataset (Conf. 4 in Table \ref{tab:configurations}).}
\end{center}
\vskip -0.2in
\end{figure}
\end{document}